\newcommand{\ours}{CSR}
\newcommand{\E}{\mathbb{E}}
\newcommand{\R}{\mathbb{R}}
\newcommand{\bO}{\mathbb{O}}
\newcommand{\argmax}{\arg\max}
\newtheorem{theorem}{Theorem}[section]
\title{Calibrated Self-Rewarding Vision Language Models}
\author{%
  Yiyang Zhou$^{1}$\thanks{Equal contribution}, Zhiyuan Fan$^{5*}$, Dongjie Cheng$^{6*}$, Sihan Yang$^7$, Zhaorun Chen$^2$\\ \textbf{Chenhang Cui$^8$, Xiyao Wang$^3$, Yun Li$^1$, Linjun Zhang$^4$, Huaxiu Yao$^1$}\\
  $^1$UNC-Chapel Hill, $^2$University of Chicago, $^3$University of Maryland\\
  $^4$Rutgers University, $^5$HKUST , $^6$PolyU, $^7$NTU, $^8$NUS\\
  \texttt{yiyangai@cs.unc.edu, huaxiu@cs.unc.edu} \\
}
\begin{document}

\maketitle

\begin{abstract}
Large Vision-Language Models (LVLMs) have made substantial progress by integrating pre-trained large language models (LLMs) and vision models through instruction tuning. Despite these advancements, LVLMs often exhibit the hallucination phenomenon, where generated text responses appear linguistically plausible but contradict the input image, indicating a misalignment between image and text pairs. This misalignment arises because the model tends to prioritize textual information over visual input, even when both the language model and visual representations are of high quality. Existing methods leverage additional models or human annotations to curate preference data and enhance modality alignment through preference optimization. These approaches are resource-intensive and may not effectively reflect the target LVLM's preferences, making the curated preferences easily distinguishable. Our work addresses these challenges by proposing the Calibrated Self-Rewarding (\ours) approach, which enables the model to self-improve by iteratively generating candidate responses, evaluating the reward for each response, and curating preference data for fine-tuning. In the reward modeling, we employ a step-wise strategy and incorporate visual constraints into the self-rewarding process to place greater emphasis on visual input. Empirical results demonstrate that \ours\ significantly enhances performance and reduces hallucinations across ten benchmarks and tasks, achieving substantial improvements over existing methods by 7.62\%. Our empirical results are further supported by rigorous theoretical analysis, under mild assumptions, verifying the effectiveness of introducing visual constraints into the self-rewarding paradigm. Additionally, \ours\ shows compatibility with different vision-language models and the ability to incrementally improve performance through iterative fine-tuning. Our data and code are available at \href{https://github.com/YiyangZhou/CSR}{https://github.com/YiyangZhou/CSR}.
\end{abstract}

\section{Introduction}
Large Vision-Language Models (LVLMs)~\cite{liu2024improved,instructblip, ye2023mplug,bai2023qwen} have achieved significant success by incorporating pre-trained large language models (LLMs) and vision models through instruction tuning. However, these LVLMs suffer from the hallucination phenomenon~\cite{rohrbach2018object}, which generates text responses that are linguistically plausible but contradict the visual information in the accompanying image. For instance, the description generated by LVLMs may include visual elements that are not depicted in the image. This issue can also occur when the LLM is highly factual and the visual backbone is capable of producing sufficiently high-quality representations. As indicated in \citet{cui2023holistic,guan2023hallusionbench}, the potential reason for this lies in the misalignment problem between image and text modalities in LVLMs, which causes the model to prioritize the text knowledge present in the training language data while ignoring the actual visual input information.


Several works have been proposed to enhance modality alignment capability in LVLMs through preference fine-tuning techniques, such as reinforcement learning from human feedback (RLHF)~\cite{sun2023aligning} and direct preference optimization (DPO)~\cite{li2023silkie,zhou2024aligning}. However, these methods often either introduce additional models, such as GPT-4, or depend on human annotation to generate preference data. This data generation process is not only resource-intensive but, more critically, fails to capture the inherent preferences of the target LVLM. Consequently, the target LVLM may easily discern preferences from such curated data, making them less effective (detailed analysis provided in Appendix~\ref{GPT-4V_analysis}). Recently, self-rewarding approaches have emerged, utilizing a single LLM for both response generation and preference modeling, showing promising results in LLM alignment~\cite{self-reward, chen2024self}. Unlike LLMs, LVLMs face modality misalignment issues in both response generation and preference modeling stages, potentially resulting in self-generated preferences overlooking visual input information. Directly applying these self-rewarding approaches to LVLMs is not capable of addressing the modality alignment problem and redirecting LVLM's attention towards emphasizing input image information.

To tackle these challenges, our work introduces the \textbf{C}alibrated \textbf{S}elf-\textbf{R}ewarding (\textbf{CSR}) approach, aimed at calibrating the self-rewarding paradigm by incorporating visual constraints into the preference modeling process. Specifically, we train the target LVLM using an iterative preference optimization framework that continuously generates preferences and optimizes the target LVLM over multiple iterations. Starting with a seed model, each iteration employs sentence-level beam search~\cite{graves2012sequence,sutskever2014sequence} to produce fine-grained candidate responses for each image and text prompt. During the beam search, for each generated sentence, we first utilize the language decoder to establish an initial reward (i.e., sentence-level cumulative probabilities). Subsequently, we calibrate this initial reward by incorporating an image-response relevance score, resulting in the calibrated reward score. These calibrated reward scores are utilized to guide the generation of the next batch of candidate sentences. Finally, responses with the highest and lowest cumulative calibrated reward scores are identified as preferred and dispreferred responses, respectively, for preference fine-tuning in the subsequent iteration.

\begin{wrapfigure}{r}{0.48\textwidth}
\vspace{-1em}
  \centering  \includegraphics[width=0.43\textwidth]{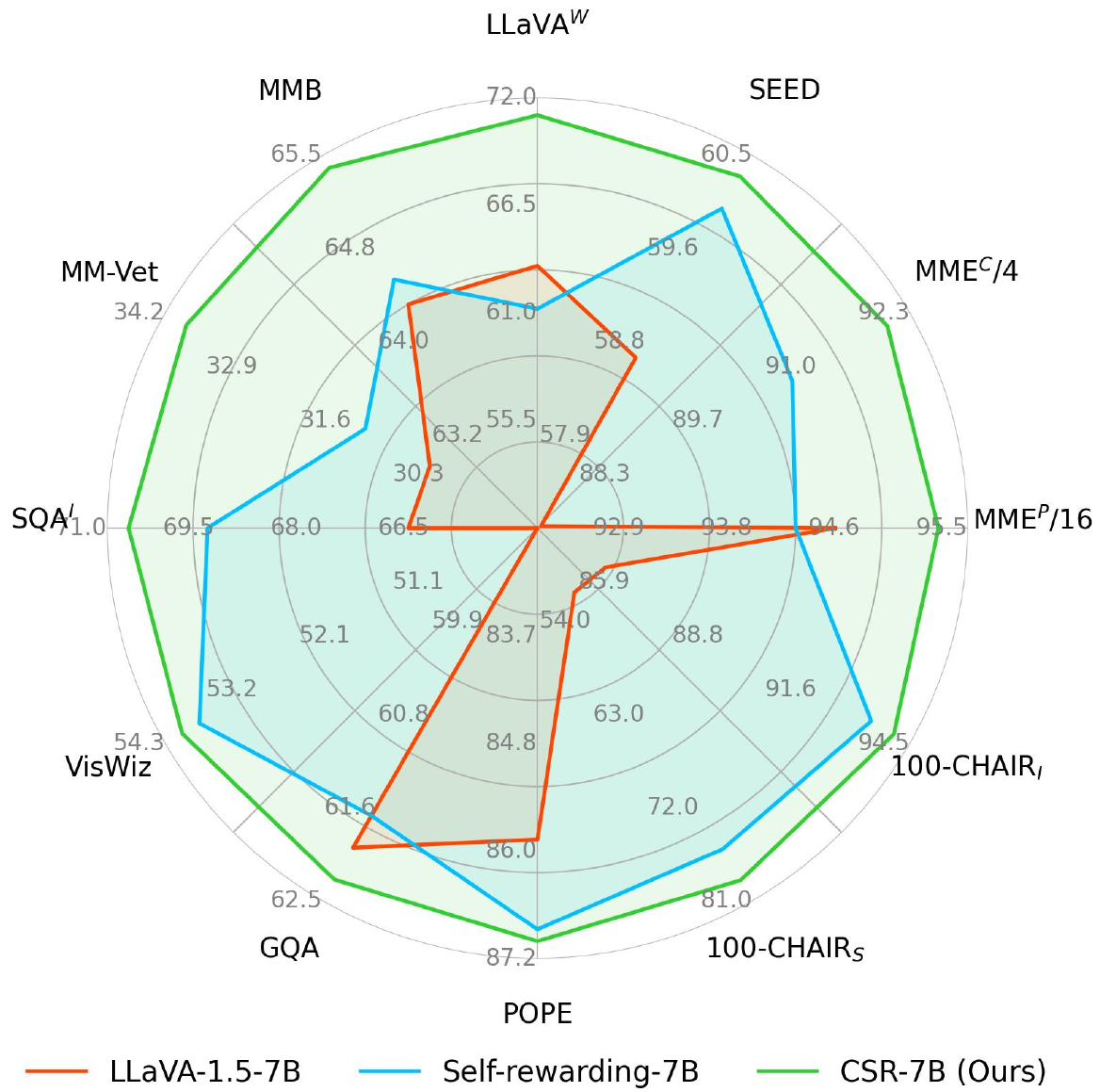}
\caption{Benchmark performance comparison.}
\vspace{-1em}
  \label{fig:radar}
\end{wrapfigure}
The primary contribution of this paper is \ours, a novel calibrated self-rewarding paradigm for improving modality alignment in LVLMs. Theoretically, with mild assumptions, we show that introducing visual constraints in the self-rewarding paradigm can improve performance. Empirically, when compared with other competitive approaches (see Figure~\ref{fig:radar} for some representative methods), the results demonstrate that \ours\ is capable of improving performance on comprehensive LVLM evaluation benchmarks, VQA tasks, and reducing hallucination, achieving up to a 7.62\% improvement on average. Additionally, we demonstrate \ours\ is capable of continuously improving performance over iterations, compatible with different large vision-language backbone models, and redirecting the attention of LVLMs toward the visual modality to achieve stronger modality alignment.

\section{Preliminaries}
In this section, we will provide a brief overview of LVLM and preference optimization.

\noindent \textbf{Large Vision Language Models.} LVLMs extend LLMs to multimodal scenario, which progressively predict the probability distribution of the next token for each input prompt. 
Given an <image $x_v$, text $x_t$> pair as input prompt $x$, LVLM outputs a text response $y$.


\noindent \textbf{Preference Optimization.} Preference optimization has shown promise in fine-tuning language models and aligning their behavior with desired outcomes. Given an input prompt $x$ , a language model with policy $\pi_\theta$ can produce a conditional distribution \begin{small}$\pi_\theta(y\mid x)$\end{small} with $y$ as the output text response. The preference data is defined as \begin{small}$\mathcal{D}=\{(x^{(i)}, y_w^{(i)}, y_l^{(i)})\}_{i=1}^N$\end{small}, where \begin{small}$y_w^{(i)}$\end{small} and \begin{small}$y_l^{(i)}$\end{small} denote the preferred and dispreferred responses for the input prompt $x^{(i)}$. Preference optimization leverage the preference data to optimize language models. Taking DPO~\cite{rafailov2023direct} as a representative example,, it formulates the probability of obtaining each preference pair as \begin{small}$p(y_w\succ y_l)=\sigma(r(x, y_w)-r(x, y_l))$\end{small}, where $\sigma(\cdot)$ is the sigmoid function. DPO optimizes the language models with the following classification loss:
\begin{equation}
\label{eq:DPO_update}
\mathcal{L}_{\textit{DPO}}(\pi_\theta; \pi_{\text{ref}}) = -\mathbb{E}_{(x,y_w,y_l) \sim \mathcal{D}} 
\left[ \log \sigma
\left(
\alpha \log \frac{\pi_\theta(y_w | x)}{\pi_{\text{ref}}(y_w | x)}
- \alpha \log \frac{\pi_\theta(y_l | x)}{\pi_{\text{ref}}(y_l | x)}
\right) \right],
\end{equation}
where $\pi_{\text{ref}}(y|x)$ represents the reference policy, i.e., the language model after performing supervised fine-tuning.







\section{Calibrated Self-Rewarding Vision Language Models}
\label{analysis_1}

To address this challenge, we propose \textbf{C}alibrated \textbf{S}elf-\textbf{R}ewarding (\textbf{\ours}), a novel approach aimed at improving modality alignment in LVLMs by integrating visual constraints into the self-rewarding paradigm. As illustrated in Figure~\ref{fig:framework}, \ours\ trains the target LVLM by alternately performing two stages: candidate response generation and preference curation and fine-tuning. In the candidate response generation stage, we employ sentence-level beam search for each input prompt to produce fine-grained candidate responses. During this process, the language decoder determines the initial reward for each generated sentence, which is then calibrated by incorporating an image-response relevance score. This calibrated reward score guides the generation of subsequent sentences and finally generate the entire response. Moving on to the preference curation and fine-tuning stage, we use the responses with the highest and lowest cumulative calibrated rewards to construct the preferred and dispreferred responses, and utilize the constructed preference pairs for fine-tuning. In the remaining of this section, we will provide detailed explanations of \ours.

\begin{figure*}[t!]
  \centering
  \includegraphics[width=0.98\textwidth]{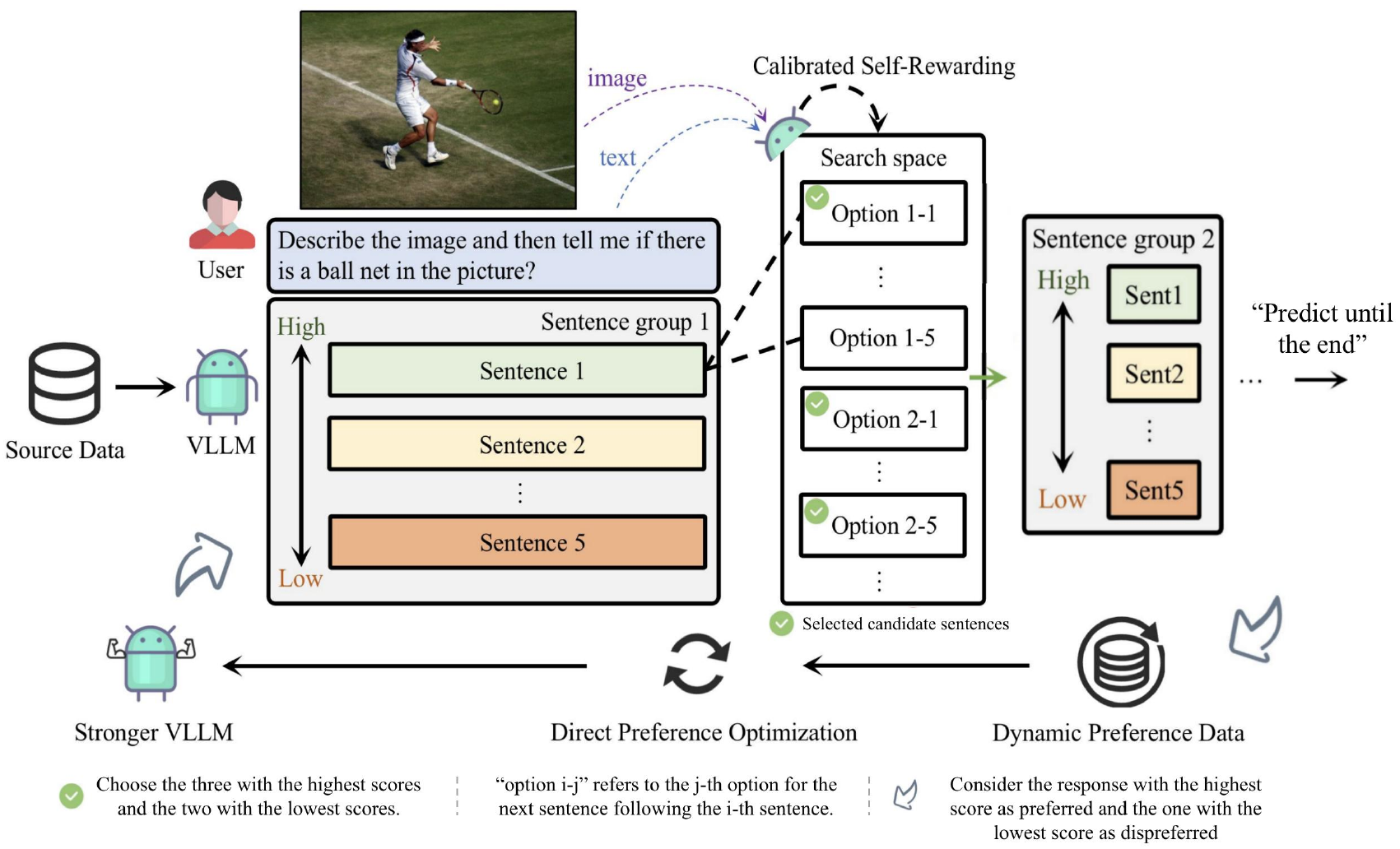}
\caption{The \ours\ framework operates an iterative process of preference data generation and learning. During preference data generation, \ours\ utilizes a sentence-level beam search approach to construct responses sentence by sentence, assigning a reward to each sentence. This reward, initially generated by the model itself, is then calibrated using image-relevance information. Preferences are determined based on the cumulative reward for each response. In each iteration, \ours\ generates new preference data and performs preference learning based on this data, continuously enhancing the model's performance.}
\vspace{-1em}
  \label{fig:framework}
\end{figure*}

\subsection{Step-Level Reward Modeling and Calibration}
Before delving into how to generate candidate response and construct preference data, in this section, we first discuss how to formulate the reward within \ours. The ideal reward in the LVLM fulfills two specific criteria: 
\vspace{-0.5em}
\begin{itemize}[leftmargin=*]
    \item Vision-Constrained Reward: This aspect aims to integrate image-relevance information into the reward definition of LVLMs. By doing so, we address the limitation of LVLM in overlooking image input data when generating preferences.
    \item Step-Wise Reward: Instead of assigning a single reward for the entire response, we opt for a step-wise approach. This involves assigning rewards at each step of response generation. Compared to a single reward, this finer-grained reward offers more detailed guidance and is more robust.
\end{itemize}
To fulfill these criteria, we propose a step-wise calibrated reward modeling strategy. Inspired by  Process-Supervised Reward Models \cite{lightman2023let}, we assign a reward score, $R(s)$, to each generated sentence $s$ during the sentence-level beam search. This score is a combination of two components: the self-generated instruction-following score, $R_T(s)$, and the image-response relevance score, $R_I(s)$. 

Specifically, the self-generated instruction-following score, $R_T(s)$, is calculated using the language decoder of the LVLM. It represents the sentence-level cumulative probability of generating sentence $s$, formulated as:
\begin{equation}
\label{eq:r_t}
    R_T(s) = \prod_{t=1}^{N_o} P(r_o \mid x, r_1, r_2, \ldots, r_{o-1}),
\end{equation}
where $N_o$ is the number of tokens in sentence $s$ and $r_o$ represents token $o$ in sentence $s$. A higher self-generated instruction-following score indicates a stronger capability of the generated response to follow instructions.

While the self-generated instruction-following score partially reflects the LVLM's preference, it still suffers from modality misalignment, potentially overlooking visual input information. To address this, we introduce an image-response relevance score, $R_I(s)$, to calibrate the reward score $R_T(s)$. This score depicts the relevance between the generated sentence $s$ and input image $x_v$. We leverage CLIP-score~\cite{clipscore} for this calculation, where the vision encoder in the CLIP model aligns with the vision encoder in the target LVLM. The image-response relevance score $R_I(s)$ is defined as:
\begin{equation}
\label{eq: image_relevance}
R_I(s) = \max(100*\cos(\mathcal{F}_I(x_v), \mathcal{F}_T(s)), 0),
\end{equation}
where the $\mathcal{F}_I(x_v)$ and $\mathcal{F}_T(s)$ represent the visual CLIP embedding and textual CLIP embedding, respectively. Finally, the calibrated reward score $R(s)$ for the generated sentence 
$s$ is defined as:
\begin{equation}
\label{eq:sub_sentence_score}
R(s) = \lambda \cdot R_I(s) + (1 - \lambda) \cdot R_T(s),
\end{equation}
where $\lambda$ is a hyperparameter used to balance the language instruction-following and image-response relevance scores. By combining both scores, we aim to redirect the attention of LVLM towards the input visual information, thus enhancing its modality alignment ability.


\subsection{Iterative Fine-Tuning}
After establishing the reward framework in \ours, we next discuss our iterative fine-tuning process. Within this framework, we iteratively perform two essential steps, namely candidate response generation and preference data curation and optimization. These steps are elaborated upon as follows:

\subsubsection{Step-Level Candidate Response Generation}
In candidate response generation, our objective is to generate responses to build preference data. To accomplish this, we employ a sentence-level beam search strategy. Initially, we concurrently sample multiple candidate sentences, utilizing the "end of sub-sentence" marker (e.g., "." in English) as the delimiter. Subsequently, for each sentence $s$, we compute its reward score $R(s)$ using Eqn.~\eqref{eq:sub_sentence_score}. From these scores, we then select the top-$k$ and bottom-$k$ sentences with the highest and lowest reward scores, respectively, to proceed to the subsequent round of sentence-level beam search. This iterative process continues until reaching the "end of response," conventionally represented as $\langle \text{eos} \rangle$. Once all sentences for a response $y=\{s_1, \cdots, s_{N_y}\}$ are generated, we calculate the cumulative reward score for the response as the sum of the reward scores for each sentence within it. This is defined as: $R(y)=\sum_{i=1}^{N_y}R(s_i)$, where $N_y$ is the number of sentences in response $y$. The detailed algorithm for candidate response generation is outlined in Algorithm \ref{alg:csr}.

\subsubsection{Preference Curation and Optimization}
After generating candidate responses with their reward scores, our next step is to curate preference dataset.
Here, for each input prompt, we select the responses with the highest and lowest cumulative calibrated reward scores as the preferred and dispreferred responses, respectively, to construct the preference dataset for fine-tuning. For each iteration $t$, we denote the constructed preference data as: \begin{small}$\mathcal{D}_t=\{(x^{(i)}, y_{w,t}^{(i)}, y_{l,t}^{(i)})\}_{i=1}^N$\end{small}. After obtaining the preference data, we fine-tune the target LVLM using DPO. At iteration $t$, we use the last iteration fine-tuned model $\pi_{\theta_{t-1}}$ as the reference model. Following Eqn~\eqref{eq:DPO_update}, the loss at iteration $t$ of \ours\ is defined as:
\begin{equation}
\label{eq:CSR_update}
\mathcal{L}_{t} = -\mathbb{E}_{(x,y_{w,t},y_{l,t}) \sim \mathcal{D}} 
\left[ \log \sigma
\left(
\alpha \log \frac{\pi_{\theta}(y_{w,t} | x)}{\pi_{\theta_{t-1}}(y_{w,t} | x)}
- \alpha \log \frac{\pi_{\theta}(y_{l,t} | x)}{\pi_{\theta_{t-1}}(y_{l,t} | x)}
\right) \right].
\end{equation}
The training process of \ours\ is detailed in Algorithm \ref{alg:csr}.

\begin{algorithm}[t]
\caption{Calibrated Self-Rewarding} 
\label{alg:csr}
\begin{algorithmic}[1]
\Require{Dataset: $\mathcal{D}=\{ x^{(i)} \}_{i=1}^N$; Reference model:$\pi_{\mathrm{ref}}$; Number of iterations: $T$}

\For{$t = 1, \ldots, T$}
    \For{each $x \in \mathcal{D}$}
        \While{not reach the end of response}
        \State Generate a bunch of candidate sentences from last-round sentences
        \For{each candidate sentence $s$}
            \State Compute the self-generated instruction-following score $R_T(s)$ by Eqn.~\eqref{eq:r_t}
            \State Calculate the image representation $\mathcal{F}_I(x_v)$ and sentence representation $\mathcal{F}_T(s)$
            \State Compute the image-response relevance score $R_I(s)$ by Eqn.~\eqref{eq: image_relevance}
            \State Compute the calibrated reward score $R(s)$ by Eqn.~\eqref{eq:sub_sentence_score}
            \EndFor
        \State Select top-k and bottom-k sentences with the highest and lowest reward scores
    \EndWhile
    \State Select the preferred response $y_{w,t}$ and dispreferred response $y_{l,t}$
    \EndFor
\State Update $\textcolor{black}{\pi_\theta} \leftarrow \arg\min_{\theta} \mathcal{L}_{t}(\textcolor{black}{\pi_\theta}; \pi_\text{ref})$, $\pi_\text{ref} \leftarrow \textcolor{black}{\pi_\theta}$.
    \EndFor
\end{algorithmic}
\end{algorithm}

\section{Experiment}
In this section, we empirically investigate \ours\ in addressing the modality misalignment problem of LVLMs, focusing on the following questions: 
(1) Can \ours\ help improve the performance of models on both comprehensive benchmarks and hallucination benchmarks? 
(2) Can \ours\ iteratively improve multimodal alignment progressively in LVLMs and lead to more factual LVLMs? 
(3) Is \ours\ compatible with different open-sourced LVLMs? 
(4) How does \ours\ change attention weights and preference pairs to align image and text modalities?

\subsection{Experimental Setups}
\label{sec:exp}
\noindent \textbf{Implementation Details.}
We utilize LLaVA-1.5 7B and 13B~\cite{liu2024improved} as the backbone models. During the preference learning process, we adapt LoRA fine-tuning~\cite{hu2021lora}. The images and prompts used to construct the preference data are randomly sampled from the detailed description and complex reasoning subclasses of the LLaVA150k dataset, totaling approximately 13,000 samples~\cite{liu2023visual}. It is worth noting that each iteration uses the same prompt and image as the previous round. 
Overall, the iterative training is conducted over three iterations, completed on one A100 80GB GPU. It takes roughly 3.5 and 5 hours for fine-tuning LLaVA-1.5 7B and LLaVA-1.5 13B, respectively. For more detailed information on training hyperparameters and training data, please refer to Appendix \ref{ex_setup}.

\noindent \textbf{Evaluation Benchmarks.} We conducted evaluations on three types of benchmarks: comprehensive benchmarks, general VQA and hallucination benchmarks. Specifically, this includes: (1) Comprehensive benchmarks (MME~\cite{fu2024mme}, SEEDbench~\cite{li2023seedbench}, LLaVA$^{\mathrm{W}}$~\cite{liu2023visual}, MMbench~\cite{liu2024mmbench}, MM-Vet~\cite{yu2023mmvet}); (2) General VQA (ScienceQA (SQA)~\cite{lu2022learn}, VisWiz~\cite{gurari2018vizwiz}, GQA~\cite{hudson2019gqa}); (3) Hallucination benchmark (POPE~\cite{li2023evaluating}, CHAIR~\cite{rohrbach2019object}). More detailed description are discussed in Appendix \ref{ex_setup}. 

\noindent \textbf{Baselines.} We will first compare \ours\ with the self-rewarding approach described by~\citet{yuan2024self}. Here, we directly apply self-rewarding to LVLM, using the prompts and experimental settings outlined in~\citet{yuan2024self} (see detailed settings in Appendix \ref{ex_setup} and Table \ref{prompt_self}). We also compared \ours\ with several data-driven preference learning methods, including Silkie (Vlfeedback)~\cite{li2023silkie}, LLaVA-RLHF (Human-preference)~\cite{sun2023aligning}, POVID~\cite{zhou2024aligning}, and RLHF-V~\cite{yu2023rlhf}. Furthermore, we compared the performance of the optimized LLaVA-1.5 via \ours\ with other state-of-the-art open-source LVLMs, including InstructBLIP~\cite{dai2023instructblip}, Qwen-VL-Chat~\cite{bai2023qwenvl}, mPLUG-Owl2~\cite{ye2023mplugowl2}, BLIP-2~\cite{li2023blip2}, and IDEFICS~\cite{laurençon2023obelics}, after the final rounds of training (\ours\ with iteration = 3). Additionally, to evaluate the effectiveness of \ours\ on other LVLMs, we applied \ours\ to a recent LVLM called Vila~\cite{lin2023vila}. For more information on these baselines, please refer to Appendix \ref{ex_setup}.



\subsection{Results}

\begin{table*}[!t]
\scriptsize
\centering
\renewcommand{\arraystretch}{1}
\caption{The performance of \ours\ on LLaVA-1.5 across all benchmarks is presented. Most baseline results, except those for self-rewarding, are sourced from \citet{zhou2024aligning}.}
\label{benchmark_new}
\setlength{\tabcolsep}{2pt}
\begin{tabular}{p{1.7cm} p{0.9cm} p{0.9cm} p{0.9cm} p{0.9cm} p{0.9cm} p{0.9cm} p{0.8cm} p{0.9cm} p{0.8cm} p{0.9cm} p{0.9cm} p{0.9cm} p{0.9cm}}
\toprule
 & \multicolumn{6}{c}{Comprehensive Benchmark} & \multicolumn{3}{c}{General VQA} & \multicolumn{3}{c}{Hallucination Benchmark} \\
\cmidrule(lr){2-7}\cmidrule(lr){8-10}\cmidrule(lr){11-13}
Method  & MME$^{\mathrm{P}}$  & MME$^{\mathrm{C}}$  & SEED  & LLaVA$^{\mathrm{W}}$  & MMB  & MM-Vet   & SQA$^{\mathrm{I}}$  & VisWiz  & GQA & POPE & CHAIR$_{\mathrm{S}}$  &  CHAIR$_{\mathrm{I}}$\\
\midrule
LLaVA-1.5-7B & 1510.7     & 348.2 & 58.6 & 63.4 & 64.3 & 30.5 & 66.8 & 50.0   & 62.0 &85.90&48.8&14.9\\
+ Vlfeedback & 1432.7     & 321.8 &  59.3 & 62.1  & 64.0 & 31.2 & 66.2 & 52.6  & \textbf{63.2}&83.72&40.3&13.2\\
+ Human-Prefer & 1490.6     & 335.0 &  58.1 & 63.7  & 63.4 & 31.1 & 65.8 & 51.7  & 61.3&81.50& 38.7  & 11.3\\
+ POVID & 1452.8      & 325.3 &  60.2 & 68.7  & 64.9 & 31.8 & 68.8 & 53.6 & 61.7&86.90& 35.2  & 8.3\\
+ RLHF-V & 1489.2      & 349.4 &  60.1 & 65.4  & 63.6 & 30.9 & 67.1 & \textbf{54.2}  & 62.1&86.20& 29.7  & 7.5\\
+ Self-rewarding & 1505.6      & 362.5 &  60.0 & 61.2  & 64.5 & 31.4 & 69.6 & 53.9  & 61.7&86.88& 24.0  & 6.7\\
+ \textbf{CSR (Ours)}   & \textbf{1524.2}     & \textbf{367.9} &  \textbf{60.3} & \textbf{71.1}  & \textbf{65.4} & \textbf{33.9} & \textbf{70.7} & 54.1 & 62.3& \textbf{87.01} & \textbf{21.0} &\textbf{6.0} \\
\midrule
LLaVA-1.5-13B  & \textbf{1531.3}     & 295.4 & 61.6 & 70.7 & 67.7 & 35.4 & 71.6 & 53.6 & 63.3&85.90&48.3  & 14.1 \\
+ Self-rewarding & 1529.0      & 300.1 &  62.8 & 65.6  & 64.5 & 35.3 & 74.3 & 56.1  & 63.2&86.58& 37.0  & 8.8\\
+ \textbf{CSR (Ours)}   & 1530.6     & \textbf{303.9} &  \textbf{62.9} & \textbf{74.7}  & \textbf{68.8} & \textbf{37.8} & \textbf{75.1} & \textbf{56.8}   & \textbf{63.7} & \textbf{87.30} & \textbf{28.0} &  \textbf{7.3}\\
\bottomrule
\end{tabular}%
\vspace{-2em}
\end{table*}

\begin{wrapfigure}{r}{0.48\textwidth}
    \vspace{-1em}
    \centering
    \includegraphics[width=0.46\textwidth]{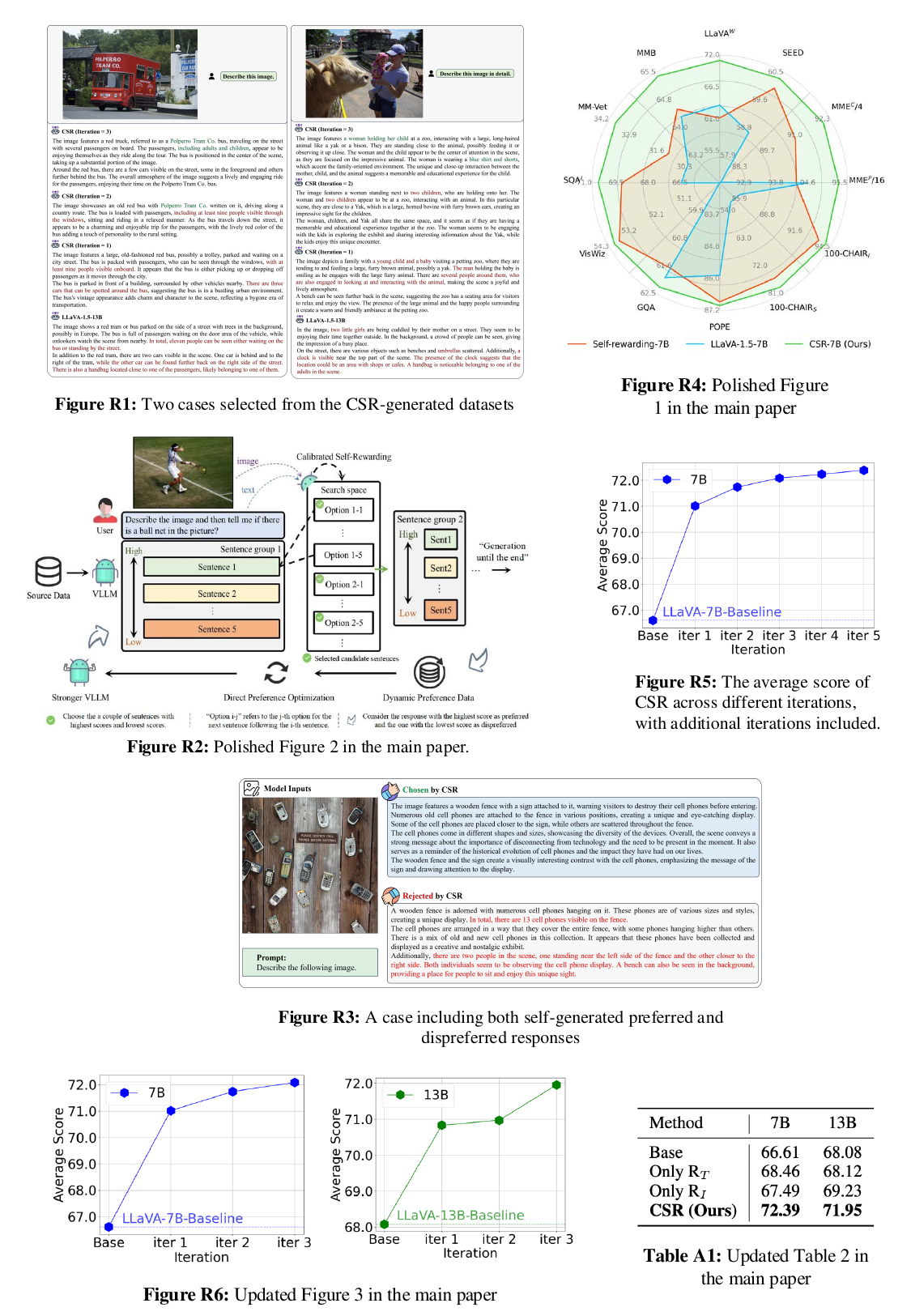}
    \caption{Average scores of \ours\ at different iterations over all benchmarks (see Table~\ref{multi_iter} and Table~\ref{multi_iter_2} in Appendix~\ref{case_study} for full results).}
    \label{fig:overall}
    \vspace{-1em}
\end{wrapfigure}
\textbf{\ours\ Continuously Improves Model Performance over Iterations.} In Figure~\ref{fig:overall}, we report the average performance of LLaVA-1.5 7B and 13B models concerning the number of training iterations on comprehensive benchmarks, general VQA tasks, and hallucination benchmarks. To facilitate score calculation, we first calculated an average score on a 100-point scale by adjusting the original values: MME$^{\mathrm{P}}$ was divided by 16, and MME$^{\mathrm{C}}$ was divided by 4, corresponding to the number of categories in MME. Additionally, since a lower CHAIR value indicates better performance, we standardized all metrics to follow a higher is better approach by transforming the CHAIR$_{\mathrm{S}}$ and CHAIR$_{\mathrm{I}}$ metrics into 100 - CHAIR$_{\mathrm{S}}$ and 100 - CHAIR$_{\mathrm{I}}$. We then calculated the average score by averaging these standardized values, which were used to compute the average percentage increase. In the experiment, the 7B model achieved an improvement of approximately 7.62\% across all benchmarks through online iterative updates, while the 13B model saw an improvement of approximately 5.25\%. According to the full results in Table~\ref{multi_iter} and Table~\ref{multi_iter_2} of Appendix~\ref{case_study}, the improvement is particularly significant on the LLaVA$^{\mathrm{W}}$ and CHAIR benchmarks, with improvements of 8.9\% and 49.50\%, respectively. The results indicate that \ours\ is capable of incrementally improving model performance over iterations, demonstrating its effectiveness in self-improving the quality of generated preference data and leading to stronger modality alignment. The degree of improvement gradually becomes smaller, which is not surprising, indicating that the model is gradually converging.

\begin{wraptable}{r}{0.31\textwidth}
\vspace{-2.5em}
\begin{center}
\small
\caption{Ablation study of vision-text reward score.}
\label{tab:ablation}
\begin{tabular}{l|cc}
\toprule
Method         & 7B    & 13B   \\ \midrule
Base &66.61 &68.08\\
Only R$_{T}$      & 68.46 & 68.12 \\
Only R$_{I}$      & 67.49 & 69.23 \\
\textbf{CSR (Ours)}     & \textbf{72.39} & \textbf{71.95} \\ \bottomrule
\end{tabular}
\vspace{-2em}
\end{center}
\end{wraptable}
\textbf{\ours\ Outperforms Competitive Preference Fine-Tuning Baselines.} Compared to preference data curation approaches (e.g., POVID, RHLF-V) that generate preference data from either additional models or human annotations, the superiority of \ours\ indicates that adapting a self-rewarding paradigm better captures the inherent preferences of the target LVLMs, achieving stronger modality alignment. Furthermore, \ours\ outperforms existing self-rewarding methods, with an average performance improvement of 2.43\%, demonstrating its effectiveness in calibrating the reward model by incorporating image-response relevance scores. This mitigates the potential issue of overlooking visual input information when estimating self-generated preferences.

In addition, we compare the performance of LLaVA-1.5 after three rounds of online \ours\ with other state-of-the-art open-sourced VLLMs and report the results in Table \ref{benchmark_3} of Appendix \ref{case_study}. Although different open-sourced VLLMs utilize various image and text encoders, \ours\ still outperforms other open-sourced VLLMs in 9 out of 10 benchmarks, further corroborating the effectiveness of \ours\ in improving modality alignment.

\subsection{Analysis}
\textbf{Ablation Study.}
To validate the effectiveness of using the image-response relevance score ($R_I$) to complement the self-generated instruction following score ($R_T$), we specifically compare \ours\ with three variants: (1) without applying \ours\ on LLaVA 1.5 (Base); (2) using \ours\ with only the self-generated instruction following score (Only $R_T$); and (3) using \ours\ with only the image-response relevance score (Only $R_I$). The results are reported in Table~\ref{tab:ablation}. We first observe that \ours\ improves performance by jointly considering both the self-generated instruction following and image-response relevance scores. This verifies its effectiveness in enhancing modality alignment by calibrating the language-driven self-rewarding paradigm with visual constraints. Additionally, we further conduct the analysis on the change of $\lambda$ in Eqn~\eqref{eq:sub_sentence_score} and found that incorporating external visual scores to calibrate the model’s rewarding process effectively enhances performance (see detailed results in Appendix \ref{case_study}.)



\begin{wrapfigure}{r}{0.35\textwidth}
    \centering
    \vspace{-1em}
    \includegraphics[width=0.35\textwidth]{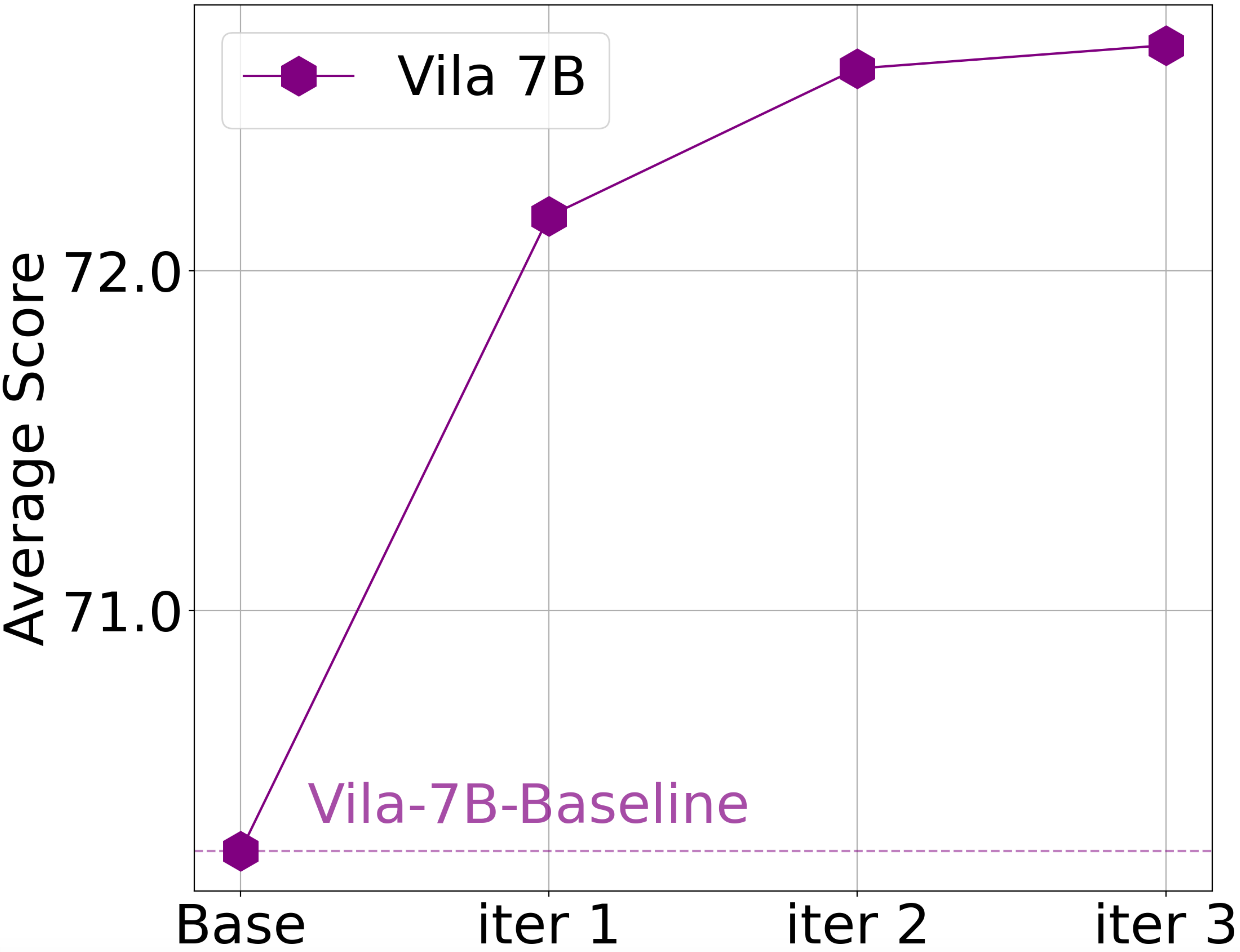}
        \caption{Average scores of \ours\ in Vila 7B at different iterations over all benchmarks (see Table~\ref{benchmark_4} in Appendix~\ref{case_study} for full results).}
    \label{fig:vila}
\vspace{-1em}
\end{wrapfigure}
\textbf{Compatibility Analysis.}
To validate \ours\ for its applicability to other LVLMs, we deployed \ours\ on Vila 7B and conducted three rounds of online iterations. We conducted experiments on all ten evaluation benchmarks and tasks, and the results are shown in Figure \ref{fig:vila}. Similar to the findings in Figure~\ref{fig:overall}, Vila demonstrates a similar phenomenon during the online iterations of \ours, where it can self-correct preferences, leading to gradual improvements in all benchmarks. For Vila, the overall performance improved by 3.37\% after three rounds of \ours\ iterations, with particularly notable increases of 8.48\% on VisWiz and 14.0\% on MM-Vet. The compatibility analysis further corroborates the generalizability and effectiveness of \ours\ in enhancing the performance of LVLMs.

\begin{wrapfigure}{r}{0.4\textwidth}
\vspace{-2em}
    \centering
        \includegraphics[width=0.4\textwidth]{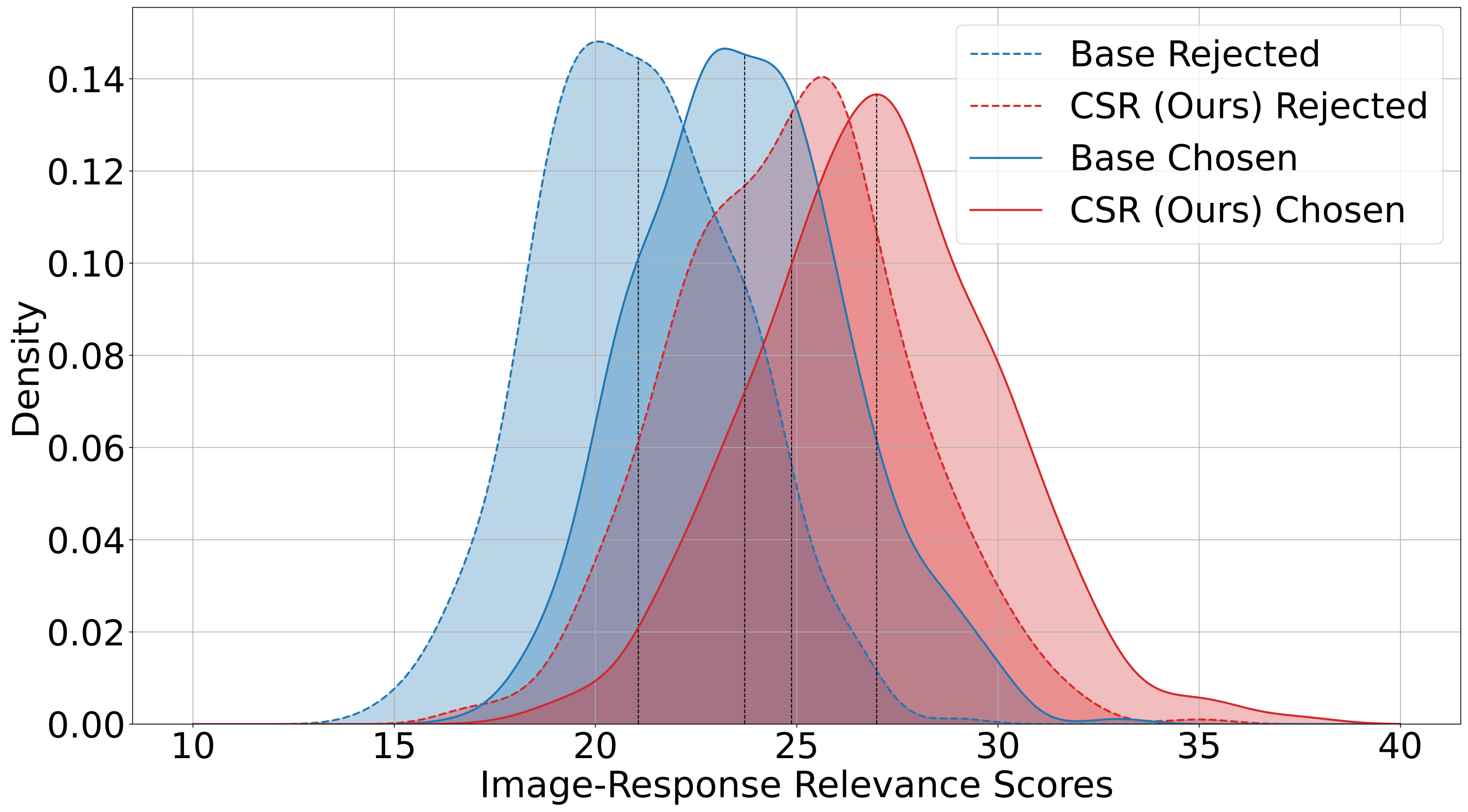}
        \caption{Image relevance scores before and after employing \ours.}
    \label{fig:multi_iter}
\vspace{-2em}
\end{wrapfigure}
\textbf{How Does \ours\ Change the Image-Response Relevance Over Iterations?}
To investigate how \ours\ gradually improve the performance over iterations, we analyzed the change of self-generated preference data with the LLaVA-1.5 7B model. In Figure~\ref{fig:multi_iter}, we illustrated the distribution of image-response relevance scores of three iterations over 500 examples from LLaVA-150k~\cite{liu2023visual}. We first observe that both the chosen (preferred) and rejected (dispreferred) responses achieve higher image-response relevance scores after the model undergoes \ours\ online iterations. This indicates that, following \ours, the responses generated by LVLMs are more closely aligned with the image information. Secondly, it can be observed that after multiple rounds of online iterations with \ours, the average image-response relevance scores for the rejected and chosen responses become closer to each other. This makes the self-generated preference data during \ours\ iterations more challenging to distinguish, while further strengthening the learning process.

\begin{figure*}[t!]
  \centering
    \vspace{2em}
\includegraphics[width=1\textwidth]{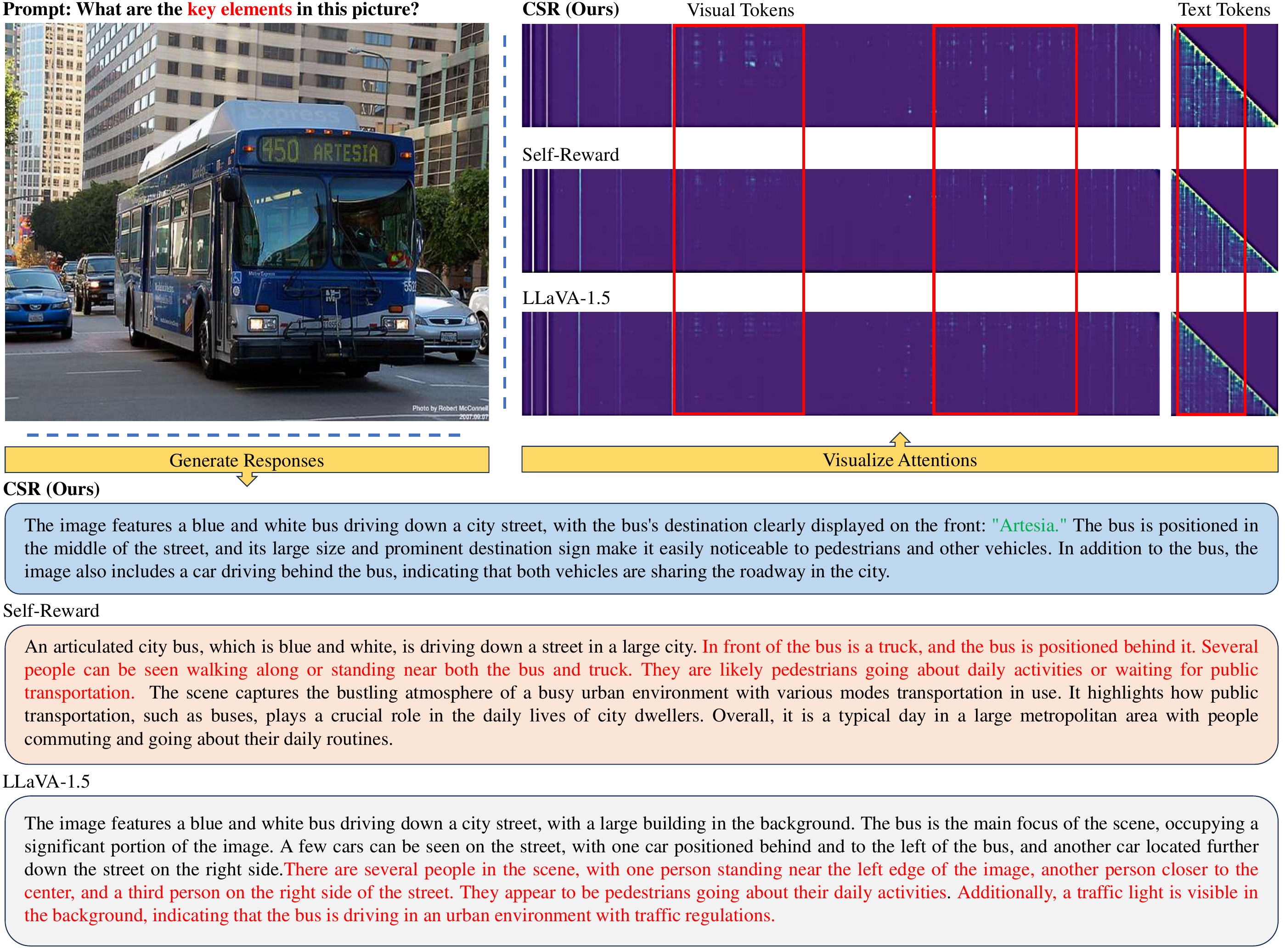}
\caption{Comparison of attention maps. After optimizing the model with \ours, the attention scores allocated to visual tokens increase, indicating that \ours\ effectively redirects the model's attention toward the input visual information during the response generation process.}  \label{fig:attention}
  \vspace{-1.5em}
\end{figure*}
\textbf{How Does \ours\ Improve Modality Alignment?}
To further understand how \ours\ affects modality alignment, in Figure~\ref{fig:attention}, we present the changes in image and text attention maps for three models: the original LLaVA-1.5 7B model, the self-rewarding approach, and \ours. These attention maps illustrate the distribution of attention scores over image and text tokens. We observe that applying \ours\ strengthens the model's attention to certain visual tokens. Simultaneously, the change of attention values of the text tokens indicates that \ours\ is capable of alleviating the issue of over-reliance on context mentioned in \citet{huang2023opera}. Additionally, compared with the self-rewarding approach, \ours\ shows a more effective distribution of attention between image and text tokens. These findings indicate that with \ours, LVLMs can better align different modalities through a calibrated self-rewarding strategy, focusing more on the visual modality rather than over-relying on contextual text.

\subsection{Case Study}
\label{sec:case}
In this section, we use LLaVA-1.5 13B as an example to illustrate changes in the model’s own responses during CSR iterations and the preference data sampled in the CSR learning process, with hallucinations and errors highlighted in red. The results are shown in Figures \ref{fig:case1} and \ref{fig:case2}, respectively. As shown in Figures \ref{fig:case1}, with each iteration of CSR, hallucinations in the model’s responses noticeably decrease. This indicates that CSR effectively refines the model’s preferences through iterative preference learning, making the model’s responses more accurate. In Figure \ref{fig:case2} of the Appendix, we present a sampled preference data pair from the model during the CSR learning process. It can be seen that through CSR, the model not only gradually refines its own preferences but also obtains high-quality preference data pairs without human annotation.

\begin{figure*}[t!]
  \centering
    \vspace{2em}
\includegraphics[width=1\textwidth]{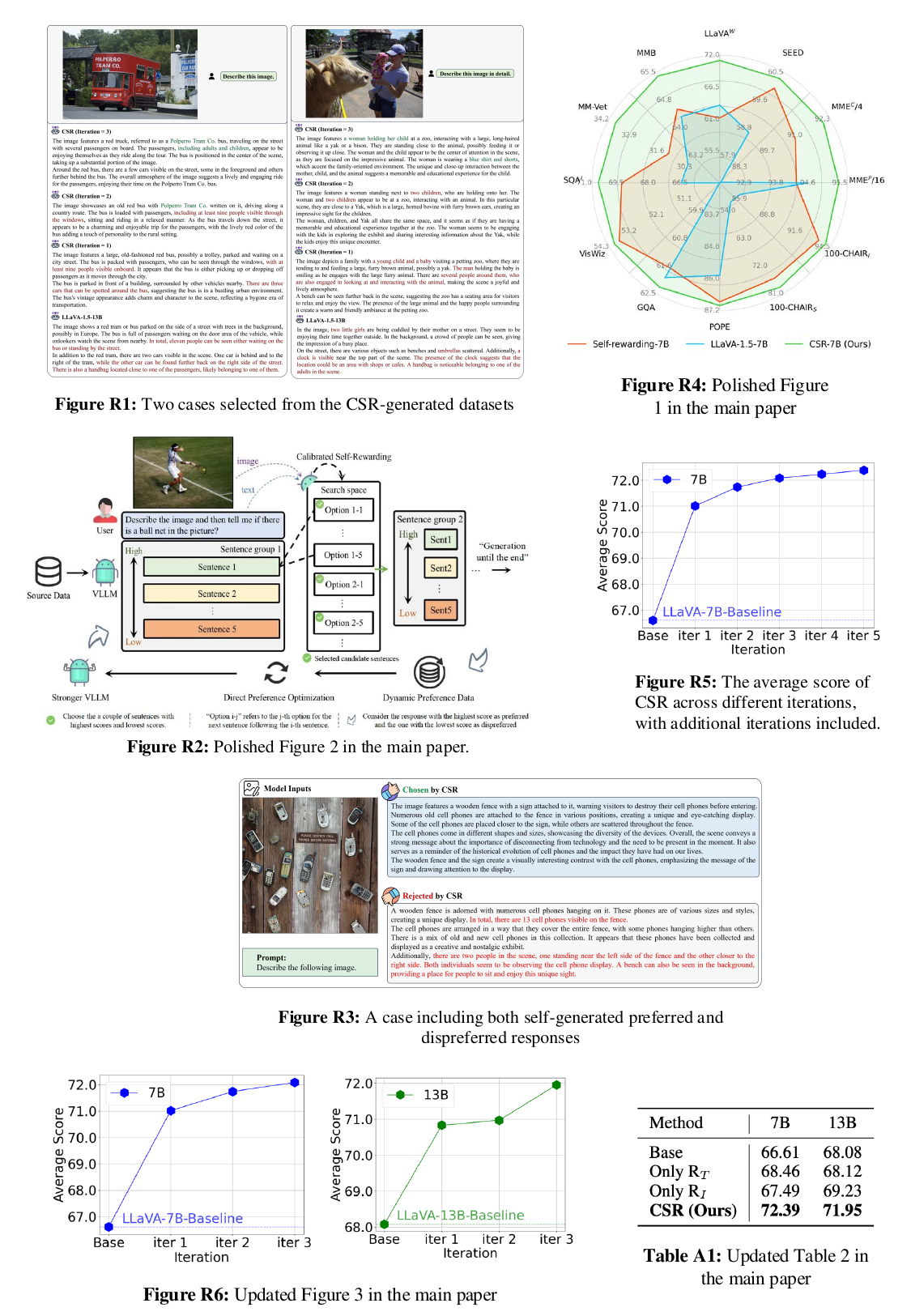}
\caption{Two cases selected from the CSR-generated datasets.}  \label{fig:case1}
  \vspace{-1.5em}
\end{figure*}

\section{Theoretical Explanation}
\label{sec:theory}
In this section, we present a theoretical framework to explain the empirical phenomenon that incorporating an image-response relevance score can calibrate the self-rewarding procedure, ultimately improving generation accuracy. 

As we consider an LVLM, to facilitate the analysis, we decompose the input prompt into $x=(x_{v}, x_{t})\in\R^{d_{v}}\times \R^{d_t}$, representing the image and text prompts respectively. Although text data typically comprises discrete tokens, we follow the CLIP theory literature \cite{nakada2023understanding,chen2023understanding,liu2024revealing} in modeling them as continuous-value random vectors in this section to elucidate the rationale behind our proposed method. More specifically, we assume the following data generative model for $x_{v}$ and $x_{t}$: $$
x_{v}=U_1 z_1+\xi_1, \text{ and } x_{t}=U_2 z_2+\xi_2,
$$
where $U_1\in\bO^{d_{v}\times r}$ and  $U_2\in\bO^{d_t\times r}$ are two orthonormal matrixces, representing decoders that transform the latent (low-dimensional) signals $z_1, z_2\in\R^r$ to images and text respectively. We assume the covariance matrices of $z_1, z_2$ are identity matrices. $\xi_1\in\R^{d_{v}}$ and $\xi_2\in\R^{d_t}$ are noise vectors, and we assume they follow sub-gaussian distributions with well-conditioned covariance matrices and sub-gaussian norms upper bounded by a universal constant. We consider the infinite data setting. This is a widely used simplification to avoid the
influence of sample randomness \cite{kim2019multiaccuracy,ghorbani2021linearized,ye2023freeze}. According to \cite{nakada2023understanding}, with an abundance of image-text pairs, the learned visual CLIP embedding $\mathcal{F}_I(x_{v})$ and textual CLIP embedding $\mathcal{F}_T(x_{t})$ converge to $U_1^\top x_{v}$ and $U_2^\top x_{t}$ respectively. To simplify our analysis without loss of generality, we consider a single score for each response $y$ and define the image-response relevance score $R_I(y)=\langle U_1^\top x_{v}, U_2^\top y \rangle$. 


We assume the ground truth $y_{truth}=V_1^* x_v+V_2^* x_t+\epsilon_y$ with weights $V_1^*\in\R^{d_v\times d_v}$ and $V_2^*\in\R^{d_v\times d_t}$. In \ours, we assume the conditional distribution at iteration $t$,  $\pi_{\theta_t}(y\mid x)$ with $\theta_t=(V_1,V_2)$, follows a Gaussian distribution $\pi_{\theta_t}(y\mid x)\propto \exp(-\|y-(V_1x_{v}+V_2x_{t}) \|^2/\sigma^2)$, where $V_1\in\R^{d_v\times d_v}$ and $V_2\in\R^{d_v\times d_t}$ are the weights matrices for the image and text inputs respectively, and $\sigma>0$ is the standard deviation. As the likelihood is monotonically decreasing with respect to $\|y-(V_1x_{v}+V_2x_{t}) \|^2$, we consider the self-generated instruction-following score $R_T(y)=-\|y-(V_1x_{v}+V_2x_{t}) \|^2$.
Then the calibrated reward score becomes $
R(y) = \lambda \cdot R_I(y) + (1 - \lambda) \cdot R_T(y),
$ for some $\lambda\in[0,1]$. In theoretical analysis, we consider a simpler version of \ours, where we assume $y_w=\arg\max_{y} R(y)$  (whose distribution is denoted by $p_{\theta_{t}}^*(y\mid x)$), 
and $y_l$ is the text output generated by $\pi_{\theta_t}(y\mid x)$. As $R(y)$ depends on $\lambda$, we denote the solution $\theta_{t+1}$ by $\theta_{t+1}(\lambda)$. In the special case where $\lambda=1$, this corresponds to the setting where we do not use the image-response relevance score at all. 

To evaluate the quality of the text output $y$, we consider a regression problem where there is an outcome $z$ associated with the ground-truth text output $y_{truth}$: $z=\beta^{*\top} y_{truth}$ with $\beta^*\in\R^{d_t}$. We evaluate the quality of $y$ by considering the loss function $L(y)=\min_{\beta\in\R^{d_t}}\E[(z-\beta^\top y)^2]$. We then have the following theorem.
\begin{theorem}\label{thm}
    Suppose that $\pi_{\theta_{t}}^*(y\mid x)$  lies in the LLM space $\{\pi_{\theta}(y\mid x): \theta\in\Theta\}$, $\|\beta^{*\top}V_1^{*\top}\beta^*\|\gg \|\beta^{*\top}V_2^{*\top}\beta^*\|$ and $\|\beta^{*\top}V_1^{\top}\beta^*\|\ll \|\beta^{*\top}V_2^{\top}\beta^*\|$,  then there exists $\lambda<1$, such that $$
\E_{\pi_{\theta_{t+1}(\lambda)}(y\mid x)}[L(y)]<\E_{\pi_{\theta_{t+1}(1)}(y\mid x)}[L(y)].
    $$
\end{theorem}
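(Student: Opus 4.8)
The plan is to reduce the theorem to a one-dimensional monotonicity statement about a least-squares residual. First I would make the single update step explicit. Since $R_T(y)=-\|y-(V_1x_v+V_2x_t)\|^2$ is strictly concave in $y$ and $R_I(y)=\langle U_1^\top x_v, U_2^\top y\rangle=x_v^\top U_1U_2^\top y$ is linear, the calibrated reward $R(y)=\lambda R_I(y)+(1-\lambda)R_T(y)$ is strictly concave, and the first-order condition $\nabla_y R(y)=0$ (equivalently, the RLHF-tilted optimum of the Gaussian reference) yields the maximizer
\[
y_w(\lambda)=V_1x_v+V_2x_t+g(\lambda)\,U_2U_1^\top x_v,
\]
where $g(\lambda)$ is an explicit scalar arising from $\lambda$ that equals $0$ at the reference point (the pure instruction-following reward, i.e.\ no visual calibration) and becomes positive once the image-relevance term is given weight. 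Because $\pi^*_{\theta_t}(y\mid x)$ is assumed to lie in the model class, the idealized infinite-data DPO step matches the new policy to $p^*_{\theta_t}$, so $\theta_{t+1}(\lambda)=(V_1+g(\lambda)U_2U_1^\top,\;V_2)$: the text weight is untouched and an image-aligned term $g(\lambda)U_2U_1^\top$ is added to the image weight. Thus the two sides of the claimed inequality differ only through the scalar $g$, and it suffices to show the evaluation loss is strictly smaller at the calibrated level $g(\lambda)$ than at the reference level.

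Next I would turn $\E_{\pi_{\theta_{t+1}(\lambda)}}[L(y)]$ into a closed form. Writing $y=(V_1+gU_2U_1^\top)x_v+V_2x_t+\eta$ with policy noise $\eta$ of variance $\sigma^2$, and $z=\beta^{*\top}(V_1^*x_v+V_2^*x_t+\epsilon_y)$, the inner minimization over $\beta$ is ordinary regression of $z$ on $y$, giving
\[
\E_{\pi_{\theta_{t+1}(\lambda)}}[L(y)]=\mathrm{Var}(z)-\Sigma_{zy}^\top\Sigma_{yy}^{-1}\Sigma_{zy},
\]
where $\Sigma_{yy}=\mathrm{Cov}(y)$ and $\Sigma_{zy}=\mathrm{Cov}(z,y)$ depend on $g$. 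Using the generative model $x_v=U_1z_1+\xi_1$, $x_t=U_2z_2+\xi_2$ with identity latent covariances, orthonormal $U_1,U_2$, and well-conditioned sub-gaussian noise, I would compute these covariance blocks explicitly. The key structural fact is that $U_1^\top x_v$ recovers the shared latent $z_1$ up to noise, so the added direction $U_2U_1^\top x_v$ injects the image latent through the text decoder and is exactly the direction along which $z$'s dominant, image-driven component lives.

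Finally I would establish the sign. Treating the residual as $\mathcal{E}(g)$, I would show that the one-sided derivative at the reference level in the direction of increasing $g$ is strictly negative. Differentiating $\Sigma_{zy}^\top\Sigma_{yy}^{-1}\Sigma_{zy}$ and collecting terms, the leading contribution is proportional to the cross term between $z$'s image loading and the newly added image-aligned direction. The magnitude conditions $\|\beta^{*\top}V_1^{*\top}\beta^*\|\gg\|\beta^{*\top}V_2^{*\top}\beta^*\|$ and $\|\beta^{*\top}V_1^\top\beta^*\|\ll\|\beta^{*\top}V_2^\top\beta^*\|$ are precisely the statements that $z$ loads heavily on the image channel while the current policy $y$ loads only weakly on it; hence this cross term is positive and of the dominant order, while the competing contributions from the text channel and the noise blocks are of strictly smaller order and cannot flip the sign. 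Consequently there exists $\lambda<1$ with $\E_{\pi_{\theta_{t+1}(\lambda)}}[L(y)]<\E_{\pi_{\theta_{t+1}(1)}}[L(y)]$, which is the assertion.

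The hard part will be the covariance algebra in the monotonicity step: controlling $\Sigma_{yy}^{-1}$ as its image block varies with $g$, tracking how the orthonormal decoders and the identity latent covariance make $U_2U_1^\top x_v$ informative about $z$, and verifying that the sub-gaussian noise terms $\xi_1,\xi_2,\eta$ enter only at lower order. A secondary subtlety is justifying the realizability/DPO fixed-point identification $\pi_{\theta_{t+1}}=p^*_{\theta_t}$ carefully, since it is the $g$-dependent \emph{direction} of the update, not merely its existence, that carries the argument through.
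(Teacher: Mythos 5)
Your setup follows the same route as the paper's proof: compute the maximizer of the strictly concave calibrated reward to get $y_w(\lambda)=V_1x_v+V_2x_t+g(\lambda)Ax_v$ (with $A$ the image-aligned matrix, $U_2U_1^\top$ in your convention, $U_1U_1^\top$ in the paper's), use realizability to identify $\theta_{t+1}(\lambda)$ with this shifted mean, and compare population least-squares residuals as the scalar $g$ varies. The gap is in your final sign step. Take the paper's reduction $V_1=0$ (licensed by $\|\beta^{*\top}V_1^{\top}\beta^*\|\ll\|\beta^{*\top}V_2^{\top}\beta^*\|$), center all variables, and write $a=\mathrm{Cov}(z,Ax_v)=A\,\mathrm{Cov}(x_v)V_1^{*\top}\beta^*$, $b=\mathrm{Cov}(z,V_2x_t)=V_2\,\mathrm{Cov}(x_t)V_2^{*\top}\beta^*$, $N=A\,\mathrm{Cov}(x_v)A^\top$, $M=\Sigma_{yy}(0)$. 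Then $\Sigma_{zy}(g)=ga+b$, $\Sigma_{yy}(g)=g^2N+M$, and
\begin{equation*}
\mathcal{E}(g)=\mathrm{Var}(z)-(ga+b)^\top(g^2N+M)^{-1}(ga+b),
\qquad
\mathcal{E}'(0)=-2\,a^\top M^{-1}b,
\end{equation*}
since $\partial_g\Sigma_{yy}$ vanishes at $g=0$. So the first derivative is a cross term between the new image direction and the \emph{baseline text-channel} regression coefficient $M^{-1}b$; the term you name as the leading contribution ("the cross term between $z$'s image loading and the newly added image-aligned direction," i.e.\ $a^\top M^{-1}a$) does not appear in the first derivative at all — it enters only at order $g^2$. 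Worse, the theorem's hypotheses work against your step: $b$ is proportional to the target's text loading, which $\|\beta^{*\top}V_1^{*\top}\beta^*\|\gg\|\beta^{*\top}V_2^{*\top}\beta^*\|$ makes negligible, so $\mathcal{E}'(0)$ is both tiny and of uncontrolled sign. The expansion is
\begin{equation*}
\mathcal{E}(g)-\mathcal{E}(0)=-2g\,a^\top M^{-1}b-g^2\bigl[a^\top M^{-1}a-b^\top M^{-1}NM^{-1}b\bigr]+O(g^3),
\end{equation*}
and the improvement the theorem asserts comes from the strictly negative second-order piece $-g^2a^\top M^{-1}a$, which dominates only once $g\gg\|b\|/\|a\|$. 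A first-derivative-at-the-reference-point argument therefore fails as written.

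There are two ways to close the gap. Route (i): keep your covariance algebra but prove the statement at second order — show that for $g$ of a suitable fixed size the $-g^2a^\top M^{-1}a$ term dominates both the sign-ambiguous linear term and the $+g^2b^\top M^{-1}NM^{-1}b$ correction, using the two dominance assumptions. Route (ii) is what the paper does, avoiding differentiation entirely: fix any $\lambda<1$ (i.e.\ $g>0$), lower-bound the uncalibrated residual by the unexplained image variance $\beta^{*\top}V_1^*\mathrm{Cov}(x_v)V_1^{*\top}\beta^*$ (at $g=0$ the response carries no image information and $x_v\perp x_t$), and upper-bound the calibrated residual by plugging in the explicit test vector $\beta_0$ satisfying $g\,U_1U_1^\top\beta_0=U_1U_1^\top V_1^{*\top}\beta^*$, which matches the target's image component inside the column space of $U_1$ and leaves the projected variance $\beta^{*\top}V_1^*(I-U_1U_1^\top)\mathrm{Cov}(x_v)(I-U_1U_1^\top)V_1^{*\top}\beta^*$; well-conditioning of $\mathrm{Cov}(x_v)$ then gives the strict inequality. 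Your plan feeds naturally into route (i), but the claim you must establish concerns the $g^2$ coefficient, not the derivative at the reference point.
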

Our theoretical analysis implies that as long as $\|\beta^{*\top}V_1^{\top}\beta^*\|\ll \|\beta^{*\top}V_2^{\top}\beta^*\|$, which happens when the model tends to prioritize textual information over visual input. 
By incorporating the image-response relevance score (corresponding to $\lambda<1$), \ours\ is able to increase the attention on image signals in generating $y$.  As a result, the solution produced by \ours\ will be better than the method without using the image-response relevance score (corresponding to $\lambda=1$). 
\section{Related Work}
\textbf{Large Visual-Language Model Hallucination.} Recently, the rapid development of visual-language alignment methods~\cite{liu2023visual,alayrac2022flamingo,radford2021learning,chameleonteam2024chameleon, yue2024mmmu, chen2023sharegpt4v, xia2024mmie} and LLMs~\cite{vicuna2023,touvron2023llama,jiang2023mistral,tunstall2023zephyr,ai2024yi} has significantly accelerated the progress of LVLMs, which extend LLMs with visual modalities and demonstrate impressive visual understanding by unifying the encoding of visual and text tokens~\cite{li2023blip2,dai2024instructblip,zhu2023minigpt4,fuyu-8b}. However, LVLMs still face the problem of hallucination~\cite{zhou2023analyzing, zhang2024eventhallusion}, where generated text descriptions contradict the visual modality information. Various approaches have been proposed to address hallucination in LVLMs, including enhancing dataset quality for fine-tuning~\cite{gunjal2024detecting, sun2023aligning, liu2023aligning, li2023silkie}, manipulating the decoding process~\cite{huang2023opera, leng2023mitigating, yu2024hallucidoctor, han2024skip, chen2024halc, Leng2023MitigatingOH}, and leveraging external closed-source models to facilitate post-hoc mitigation of hallucination~\cite{zhou2023analyzing, yin2023woodpecker, xia2024rule, xia2024mmed, shao2024visual}. Though these approaches alleviate hallucination to some extent, they do not focus directly on improving modality alignment.

\textbf{Preference and Modality Alignment.} In large models, alignment is necessary to ensure their behavior aligns with human preferences~\cite{ziegler2020finetuning, rafailov2023direct, jaques2020way}. In LVLMs, alignment manifests as modality misalignment, where the generated textual responses are supposed to follow the input visual information. Recently, preference optimization has been used to address the modality misalignment problem. These optimizations involve preference data curated by human annotators~\cite{sun2023aligning, gunjal2024detecting, yu2023rlhf} and additional models (e.g., GPT-4)~\cite{li2023silkie, zhou2024aligning}. While these methods improve the ability of LVLMs to align modalities, their reliance on human annotation or additional models is resource-intensive and may introduce additional biases. Furthermore, these models cannot fully capture the inherent preferences of LVLMs, making the curated preference data less effective. Instead, \ours\ leverages a calibrated self-rewarding strategy, aiming to stimulate the LVLMs' self-correction and enhancement capabilities, thereby further improving modality alignment.

\textbf{Self-Improvement in Large Language Models.} Self-improvement emerges as a powerful paradigm for LLMs to enhance themselves without significant external intervention. For example, self-rewarding and online alignment propose a method that selects consistent answers generated by the model to fine-tune itself~\cite{self-improve,wang2024cream} , thereby improving its reasoning ability. Similarly, \citet{chen2024self} utilizes self-play to enhance the model's performance by distinguishing its self-generated responses from those in human-annotated training data. Unlike prior methods that primarily target LLMs, \ours\ addresses the modality misalignment issue in LVLMs during the preference modeling process by introducing visual constraints, making it particularly well-suited for LVLMs.

\section{Conclusion}
In this paper, we investigate the challenge of enhancing modality alignment in LVLMs by introducing a calibrated self-rewarding approach, which integrates visual constraints into the preference modeling process of the self-rewarding paradigm. Empirically, \ours\ enhances the alignment between image and text modalities, significantly reducing hallucination and improving performance on various LVLM evaluation benchmarks. These empirical results are further supported by rigorous theoretical findings. Additionally, \ours\ is capable of continuously enhancing LVLM capabilities over iterations, leading to better utilization of visual information.

\section*{Acknowledgement}
We thank the Center for AI Safety for supporting our computing needs. This research was supported by Cisco Faculty Research Award.

\bibliographystyle{unsrtnat}
\bibliography{reference}


\appendix

\newpage
\section{Additional Results}

\subsection{Experimental Setup}
\label{ex_setup}

\subsubsection{Hyperparameter Settings}
\begin{wrapfigure}{r}{0.45\textwidth}  \centering
    \centering
    \vspace{-2em}
    \includegraphics[width=0.43\textwidth]{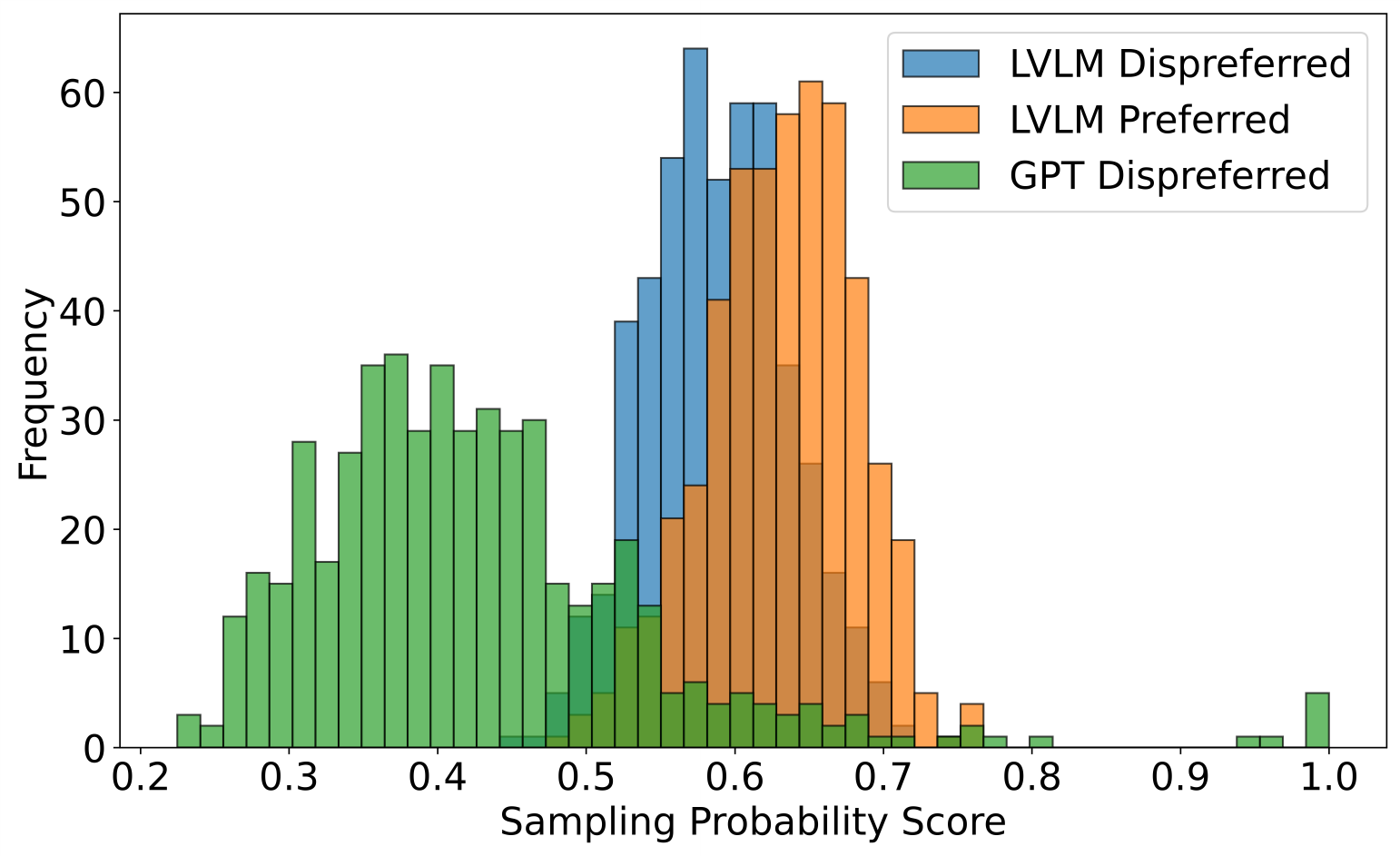}
    \caption{Distribution of preferred responses and dispreferred responses based on the sampling probability scores generated by LVLMs' language models.}
    \label{fig:Language_Clip_Scores}
    \vspace{-1em}
\end{wrapfigure}
\textbf{Sentence-Level Beam Search.} We configure our parameters as follows to ensure both diversity and quality in the sampled data. The num\_beamsparameter, set to 5, determines the capacity of input at each search layer. Additionally, num\_token\_beams, also set to 5, ensures that each beam search returns 5 token-level search results. The eos\_token\_id is set to the token for a period, effectively controlling the sentence-by-sentence generation process. The max\_length parameter, set to 1024, prevents truncation errors and infinite repetitions by controlling the maximum length, while max\_new\_tokens, set to 74, limits the maximum length of newly generated content to avoid exceeding the CLIP encoding limit.

To further enhance data diversity, we utilize group beam search by setting the num\_beam\_group parameter to 5. This approach, when matched with token-level search, significantly boosts the diversity of each data point. The diversity\_penalty parameter, set to a value of 3.0, effectively controls the diversity and quality of the sampled data among different beam groups.

\textbf{Calibrated Rewarding.} We set the clip score weight to 0.9 and the language score weight to 0.1 when calculating the scores, giving greater emphasis to visual calibration.

\subsection{Evaluation Metrics and Benchmarks}

\begin{itemize}[leftmargin=*]
\item{MME}~\cite{fu2024mme} is a comprehensive benchmark for assessing the capabilities of LVLMs in multimodal tasks. It systematically evaluates models across two primary dimensions: perception and cognition, through 14 meticulously designed subtasks that challenge the models' interpretative and analytical skills.

\item{SEED-Bench}~\cite{li2023seedbench} is designed to evaluate the generative comprehension capabilities of LVLMs. It features an extensive dataset of 19K multiple-choice questions with precise human annotations, covering 12 distinct evaluation dimensions that assess both spatial and temporal understanding across image and video modalities.

\item{LLaVA$^{\mathrm{W}}$}~\cite{liu2023visual} is a comprehensive benchmark for evaluating visual reasoning models. It comprises 24 diverse images with a total of 60 questions, covering a range of scenarios from indoor and outdoor settings to abstract art. 

\item{MMBench}~\cite{liu2024mmbench} introduces a dual-pronged approach: a meticulously curated dataset that significantly expands the scope and diversity of evaluation questions, and a pioneering CircularEval strategy that leverages ChatGPT to transform free-form predictions into structured choices.

\item{MM-Vet}~\cite{yu2023mmvet} is an evaluation benchmark tailored for assessing the multifaceted competencies of LVLMs. It systematically structures complex multimodal tasks into 16 distinct integrations derived from a combination of 6 core vision-language capabilities, providing a granular analysis of model performance across diverse question types and answer styles.

\item{ScienceQA}~\cite{lu2022learn} is a multimodal benchmark designed to evaluate and diagnose the multi-hop reasoning ability and interpretability of AI systems within the domain of science. It offers an expansive dataset of approximately 21k multiple-choice questions across a broad spectrum of scientific topics, complemented by detailed answer annotations, associated lectures, and explanations.

\item{VizWiz}~\cite{gurari2018vizwiz} is a dataset in the field of visual question answering (VQA),  derived from a naturalistic setting with over 31,000 visual questions. It is distinguished by its goal-oriented approach, featuring images captured by blind individuals and accompanied by their spoken queries, along with crowdsourced answers.


\item{GQA}~\cite{hudson2019gqa} is a dataset engineered for advanced real-world visual reasoning, utilizing scene graph-based structures to generate 22 million diverse, semantically-programmed questions. It incorporates a novel evaluation metrics suite focused on consistency, grounding, and plausibility, establishing a rigorous standard for assessing in vision-language tasks.

\item{POPE}~\cite{li2023evaluating} is an assessment methodology designed to scrutinize object hallucination in LVLMs. It reformulates the evaluation into a binary classification task, prompting LVLMs with straightforward Yes-or-No queries to identify hallucinated objects. POPE offers a stable and adaptable approach, utilizing various object sampling strategies to reveal model tendencies towards hallucination.

\item{CHAIR}~\cite{rohrbach2019object} is a widely-recognized tool for evaluating the incidence of object hallucination in image captioning tasks, which has two variants: CHAIR$_{\text{I}}$ and CHAIR$_{\text{S}}$, which assess object hallucination at the instance and sentence levels, respectively. Formulated as: 
\[
\text{CHAIR}_I = \frac{|\{\text{hallucinated objects}\}|}{|\{\text{all mentioned objects}\}|} \quad \text{CHAIR}_S = \frac{|\{\text{captions with hallucinated objects}\}|}{|\{\text{all captions}\}|}
\]
Specifically, we randomly sampled 500 images from the COCO~\cite{lin2015microsoft} validation set and evaluated object hallucination using the CHAIR metric.
\end{itemize}

\subsection{Overview of the Baselines}

\begin{itemize}[leftmargin=*]
\item{LLaVA-1.5}~\cite{liu2024improved} is an improvement based on the original LLaVA~\cite{liu2023visual} model demonstrating exceptional performance and data efficiency through visual instruction tuning. It enhanced with a CLIP-ViT-L-336px visual backbone and MLP projection. By incorporating academic-task-oriented VQA data and simple response formatting prompts, LLaVA-1.5 achieves the state-of-the-art results at that time with a remarkably modest dataset of just 1.2 million public images.

\item{InstructBLIP}~\cite{dai2023instructblip} leverages instruction tuning on pretrained BLIP-2 models, integrating an instruction-aware Query Transformer to enhance feature extraction for diverse vision-language tasks. It achieved state-of-the-art zero-shot performance at the time across 13 datasets and excelled in fine-tuned downstream tasks, such as ScienceQA, showcasing its advantage over contemporaneous multimodal models.

\item{Qwen-VL-Chat}~\cite{bai2023qwenvl} is  built upon the Qwen-LM~\cite{bai2023qwen} with a specialized visual receptor and input-output interface. It is trained through a 3-stage process and enhanced with a multilingual multimodal corpus, enabling advanced grounding and text-reading capabilities.

\item{mPLUG-Owl2}~\cite{ye2023mplugowl2} employs a modular network design with a language decoder interface for unified modality management. It integrates shared modules for cross-modal collaboration and modality-adaptive components for feature retention, enhancing generalization in both text-only and multimodal tasks.

\item{BLIP-2}~\cite{li2023blip2} is a vision-language pre-training framework that efficiently leverages off-the-shelf frozen image encoders and LLMs. Employing a two-stage pre-training strategy with a lightweight Querying Transformer, BLIP-2 bridges the modality gap between vision and language, enabling zero-shot image-to-text generation that adheres to natural language instructions while maintaining high compute-efficiency.

\item{IDEFICS}~\cite{laurençon2023obelics} is an open-access visual language model that expands upon the Flamingo~\cite{alayrac2022flamingo} architecture, offering both base and instructed variants with 9 billion and 80 billion parameter sizes. It is developed using solely publicly available data and models.

\item{POVID}~\cite{zhou2024aligning} is a novel training paradigm aligns the preferences of VLLMs through external preference data from GPT4 and the inherent hallucination patterns within the model triggered by noisy images.

\item{RLHF-V}~\cite{yu2023rlhf} collected fine-grained paragraph-level corrections from humans on hallucinations and performing dense direct preference optimization on the human feedback.

\item{Silkie}~\cite{li2023silkie} constructed a VLFeedback dataset using VLLMs annotation. Specifically, the responses were generated by 12 LVLMs models conditioned on multimodal instructions extracted from different datasets. The entire dataset was evaluated using GPT-4V to assess the generated outputs in terms of helpfulness, visual faithfulness, and ethical considerations. In this paper, the VLFeedback dataset was utilized to perform one round of DPO on LLaVA-1.5.

\item{LLaVA-RLHF}~\cite{sun2023aligning} proposes a novel alignment algorithm called Factually Augmented RLHF, which enhances the reward model by incorporating additional factual information such as image captions and ground-truth multi-choice options. In this paper, the annotated preference data is used to conduct one round of preference learning on LLaVA1.5.

\item{Self-rewarding}~\cite{yuan2024self} introduces a method for self-feedback learning in LLMs and serves as a baseline for our approach, referred to as \ours. Specifically, for each input image and prompt, two outputs are sampled from LLaVA-1.5. The model is provided with the prompt mentioned in Table~\ref{prompt_self} and is tasked with determining which output is better. Finally, LLaVA-1.5 is fine-tuned using the collected preference data, with the entire setup and the images and prompts used for inference matching those of \ours.
\end{itemize}

\subsection{Do Different Sources of Preference Data Have Different Impacts?}
\label{GPT-4V_analysis}

The sources of preference data generally fall into two main categories: external preference data and self-generated data. External preference data typically represent preferences obtained from human annotations or GPT-4. Although external preference data generally have higher quality compared to self-generated data, are they really more effective? We conducted an analysis using 500 samples obtained from the original LLaVA-1.5 7B model. Following the same pipeline as \ours, we selected samples with the highest and lowest rewards as preferred (chosen) and dispreferred (rejected) responses. We further employed the GPT-4 API to transform preferred responses into dispreferred ones, with specific prompts referenced in Table \ref{prompt_2}.

In Figure \ref{fig:Language_Clip_Scores}, we present the distribution based on both the sampling probabilities score generated by the target LVLM, which describes the probability of the LVLM generating this response. Clearly, compared to the model's own generated dispreferred responses, the dispreferred responses modified by GPT-4V are not as easily confusable for the model. This result partially supports the idea that dispreferred responses generated by external models are more easily distinguishable by the target LVLM, making them less effective.

\begin{table}[h]
    \centering
        \caption{Prompt for self-reward: utilizing the model itself as a judge to determine whether the corresponding response is a chosen response or a reject response.}
        \vspace{0.5em}
    \setlength{\tabcolsep}{10pt}
    \renewcommand{\arraystretch}{1.2}
    \begin{tabularx}{\textwidth}{X}
        \toprule
        Now you act as a judge, helping me determine which of the two texts I provide is closer to the given image and has fewer errors.\\******************************************************************************\\
            \textbf{Response 1}:\\
            \{response 1\}\\
            \textbf{Response 2}:\\
            \{response 2\}\\
******************************************************************************\\
            Please strictly follow the following format requirements when outputting, and don't have any other unnecessary words.\\
            \textbf{Output Format}:\\
            response 1 or response 2.\\
   \bottomrule
    \end{tabularx}
    \label{prompt_self}
\end{table}


\begin{table}[h]
    \centering
        \caption{Prompt for GPT-4 API: transform the provided response into negative ones based on the provided image.}
        \vspace{0.5em}
    \setlength{\tabcolsep}{10pt}
    \renewcommand{\arraystretch}{1.2}
    \begin{tabularx}{\textwidth}{X}
        \toprule
        Transform the provided response into negative ones based on the provided image.\\******************************************************************************\\
            \textbf{Response}:\\
            \{chosen response from another LVLM or ground truth\}\\
            \textbf{Requirements}:\\
            (1) Revise the response while maintaining its original format and order as much as possible.\\(2) Based on the provided image, primarily add, replace, or modify entities in the input response to make them related to the image but incorrect. Adjust their attributes and logical relationships accordingly.\\(3) The modifications in (2) must align with the image information, making the revised result difficult to discern.\\******************************************************************************\\
            Please strictly follow the following format requirements when outputting, and don't have any other unnecessary words.\\
            \textbf{Output Format}:\\
            negative response\\
   \bottomrule
    \end{tabularx}
    \label{prompt_2}
\end{table}

\label{train_setup}





\subsection{Additional Experiments}
\label{case_study}

In this subsection, we provide a comparison of CSR with other state-of-the-art models, a performance comparison of different CSR iterations, a comparison of hallucinations in different CSR iterations, validation experiments of CSR on other models, ablation study on $\lambda$ in Eqn (\ref{eq:sub_sentence_score}), and the relationship between reward score and average performance score. Experiments strongly demonstrate the effectiveness of CSR. 

For the ablation study on $\lambda$ in Eqn (\ref{eq:sub_sentence_score}), our training settings are consistent with Table \ref{benchmark_new}, with three rounds of iteration. The experimental results in Table \ref{performance_alpha} show that as the value of $\lambda$ increases, the model’s performance on various benchmarks improves. This suggests that calibrating the model’s rewarding process using the visual score can enhance the preference learning process, thereby boosting performance.

\begin{table*}[!t]
\centering
\small
\renewcommand{\arraystretch}{1}
\caption{Comparison of LLaVA-1.5 with \ours\ and other open-sourced state-of-the-art LVLMs.}
\label{benchmark_3}
\setlength{\tabcolsep}{2pt}
\begin{tabular}{lccccccccc}
\toprule
 & \multicolumn{6}{c}{Comprehensive Benchmark} & \multicolumn{3}{c}{General VQA} \\
\cmidrule(lr){2-7}\cmidrule(lr){8-10}
Method  & MME$^{\mathrm{P}}$  & MME$^{\mathrm{C}}$  & SEED  & LLaVA$^{\mathrm{W}}$  & MMB  & MM-Vet   & SQA$^{\mathrm{I}}$  & VisWiz  & GQA \\
\midrule
BLIP-2  & 1293.8      & 290.0  & 46.4  & 38.1 & -  & 22.4 & 61.0 & 19.6    &  41.0  \\
InstructBLIP  & 1212.8      & 291.8  & 53.4  & 60.9 & 36.0  & 26.2 & 60.5 & 34.5  & 49.2  \\
IDEFICS   & 1177.3      & -  & 45.0  & 45.0 & 48.2  & 30.0 & - & 35.5  & 38.4 \\
Qwen-VL-Chat  & 1487.6      & 360.7  & 58.2  & 67.7 & 60.6  & \textbf{47.3} & 68.2 & 38.9   & 57.5  \\
mPLUG-Owl2 & 1450.2      & 313.2  & 57.8  & 59.9 & 64.5  & 36.2 & 68.7 & 54.5   & 56.1  \\
\midrule
\textbf{CSR iter-3 7B}   & 1524.2     & \textbf{367.9} &  60.3 & 71.1  & 65.4 & 33.9 & 70.7 & 54.1   & 62.3 \\
\midrule
\textbf{CSR iter-3 13B}   & \textbf{1530.6}     & 303.9 &  \textbf{62.9} & \textbf{74.7}  & \textbf{68.8} & 37.8 & \textbf{75.1} & \textbf{56.8}   & \textbf{63.7}\\
\bottomrule
\end{tabular}%
\end{table*}

\begin{table*}[!t]
\small
\centering
\renewcommand{\arraystretch}{1}
\caption{The performance of \ours\ online iteration with LLaVA-1.5 as the backbone on comprehensive benchmarks and general VQA.}
\label{multi_iter}
\setlength{\tabcolsep}{2pt}
\begin{tabular}{lcccccccccc}
\toprule
 & \multicolumn{6}{c}{Comprehensive Benchmark} & \multicolumn{3}{c}{General VQA} \\
\cmidrule(lr){2-7}\cmidrule(lr){8-10}
Method  & MME$^{\mathrm{P}}$  & MME$^{\mathrm{C}}$  & SEED  & LLaVA$^{\mathrm{W}}$  & MMB  & MM-Vet   & SQA$^{\mathrm{I}}$  & VisWiz  & GQA \\
\midrule
LLaVA-1.5-7B & 1510.7     & 348.2 & 58.6 & 63.4 & 64.3 & 30.5 & 66.8 & 50.0   & 62.0 \\
+ \textbf{CSR iter-1}   & 1500.6      & 367.5 &  60.4 & 69.7  & 64.7 & 32.2 & 70.3 & 54.0  & 62.1 \\
+ \textbf{CSR iter-2}   & 1519.0      & \textbf{368.9} &  60.3 & 70.4  & 65.2 & 33.7 & 70.1 & 54.0  & 62.3 \\
+ \textbf{CSR iter-3}   & 1524.2     & 367.9 &  60.3 & 71.1  & 65.4 & \textbf{33.9} & 70.7 & 54.1 & 62.3 \\
+ \textbf{CSR iter-4}	&\textbf{1524.6}	&368.8	&60.4	&71.0	&65.3	&33.9	&70.4	&54.0	&62.2\\
+ \textbf{CSR iter-5}	&1520.1	&367.2	&\textbf{60.5}	&\textbf{71.3}	&\textbf{65.4}	&33.8	&\textbf{70.8}	&\textbf{54.2}	&\textbf{62.4}\\
\midrule
LLaVA-1.5-13B  & 1531.3     & 295.4 & 61.6 & 70.7 & 67.7 & 35.4 & 71.6 & 53.6 & 63.3 \\
+ \textbf{CSR iter-1}  & \textbf{1533.1}      & 303.6  & 63.0  & 74.4 & 68.4  & 37.4 & 74.8 & 56.8  & 63.2\\
+ \textbf{CSR iter-2}   & 1530.4     & 301.1 &  \textbf{63.0} & 74.3  & 68.5 & 37.2 & 75.0 & 56.0  & 63.2\\
+ \textbf{CSR iter-3}   & 1530.6     & \textbf{303.9} &  62.9 & \textbf{74.7}  & \textbf{68.8} & \textbf{37.8} & 75.1 & \textbf{56.8}   & 63.7\\
+ \textbf{CSR iter-4}	&1530.4	&301.4	&63.0	&74.2	&68.3	&37.3	&\textbf{75.2}	&56.6	&63.4\\
+ \textbf{CSR iter-5}	&1531.1	&302.2	&62.8	&74.0	&68.2	&37.4	&74.8	&56.7	&\textbf{63.7}\\

\bottomrule
\end{tabular}%
\end{table*}

\begin{table*}[!t]
\small
\renewcommand{\arraystretch}{1}
\centering
\caption{The performance of \ours\ online iteration with LLaVA-1.5 as the backbone on hallucination benchmarks.}
\label{multi_iter_2}
\setlength{\tabcolsep}{2pt}
\begin{tabular}{lcccccc}
\toprule
 & \multicolumn{4}{c}{Hallucination Benchmark} \\
\cmidrule(lr){2-5}
Method  & POPE$_{\mathrm{acc}}$& POPE$_{\mathrm{f1}}$  & CHAIR$_{\mathrm{S}}$  & CHAIR$_{\mathrm{I}}$  & Avg Length\\
\midrule
LLaVA-1.5-7B & 85.90      & 84.29 &48.8  &14.9  & 89.03 \\
+ \textbf{CSR iter-1}   & 86.94    &85.80  & 26.6 & 7.2 & 80.59 \\
+ \textbf{CSR iter-2}   & 86.82     &85.62   & 23.0  &6.1 & 82.62  \\
+ \textbf{CSR iter-3}   & 87.01     &85.93  & 21.0 &6.0  & 83.29\\
+ \textbf{CSR iter-4}   & 87.05     &85.95  & 19.0 &5.9  & 81.34\\
+ \textbf{CSR iter-5}   & \textbf{87.16}     &\textbf{85.98}  & \textbf{18.3} &\textbf{5.4}  & 82.07\\
\midrule
LLaVA-1.5-13B  & 85.90   & 84.87  &48.3  & 14.1 & 89.73\\
+ \textbf{CSR iter-1}  & 87.28      & 86.29  & 36.0  & 9.0 &98.85 \\
+ \textbf{CSR iter-2}   & \textbf{87.33}     & 86.36  &36.0  &7.8 &105.0  \\
+ \textbf{CSR iter-3}   & 87.30     &86.31  & 28.0 &  7.3 &107.8\\
+ \textbf{CSR iter-4}   & 87.20     &\textbf{86.58}  & \textbf{27.4} &  7.4 &112.3\\
+ \textbf{CSR iter-5}   & 87.18     &86.51  & 28.0 &  \textbf{7.3} &102.4\\
\bottomrule
\end{tabular}%
\end{table*}

\begin{table*}[!h]
\scriptsize
\centering
\renewcommand{\arraystretch}{1}
\caption{The performance of \ours\ online iteration with Vila 7B as the backbone.}
\label{benchmark_4}
\setlength{\tabcolsep}{2pt}
\begin{tabular}{p{1.7cm} p{0.9cm} p{0.9cm} p{0.9cm} p{0.9cm} p{0.9cm} p{0.9cm} p{0.8cm} p{0.9cm} p{0.8cm} p{0.9cm} p{0.9cm} p{0.9cm} p{0.9cm}}
\toprule
 & \multicolumn{6}{c}{Comprehensive Benchmark} & \multicolumn{3}{c}{General VQA} & \multicolumn{3}{c}{Hallucination Benchmark} \\
\cmidrule(lr){2-7}\cmidrule(lr){8-10}\cmidrule(lr){11-13}
Method  & MME$^{\mathrm{P}}$  & MME$^{\mathrm{C}}$  & SEED  & LLaVA$^{\mathrm{W}}$  & MMB  & MM-Vet   & SQA$^{\mathrm{I}}$  & VisWiz  & GQA & POPE & CHAIR$_{\mathrm{S}}$  &  CHAIR$_{\mathrm{I}}$\\
\midrule
Vila & 1533.0     & 316.4 &  61.1  & 69.7  & 68.9 & 34.9 & 68.2 & 57.8   & 62.3 & 85.50&31.0  &8.8\\
+ \textbf{CSR iter-1}   & 1520.6     & 321.9&  63.2 & 73.5  & \textbf{69.3} & 38.3 & 71.9 & 62.3   & 62.2 &86.82&29.2&\textbf{7.9}\\
+ \textbf{CSR iter-2}   & 1536.0     & \textbf{322.6} &  \textbf{63.4} & 74.2  & 69.1 & 39.7 & \textbf{72.3} & 62.6   & 62.1 &87.30&28.2&8.0\\
+ \textbf{CSR iter-3}   & \textbf{1542.2}     & 321.5 &  \textbf{63.4} & \textbf{74.3}  & \textbf{69.3} & \textbf{39.8} & 72.2 & \textbf{62.7}   & \textbf{62.4} & \textbf{87.31}&\textbf{28.0}&8.2 \\
\bottomrule
\end{tabular}%
\end{table*}

\begin{table*}[!t]
\centering
\scriptsize
\renewcommand{\arraystretch}{1}
\caption{Performance comparison of CSR on LLaVA-1.5 7B with different $\lambda$ values on various benchmarks.}
\label{performance_alpha}
\setlength{\tabcolsep}{2pt}
\begin{tabular}{lcccccccccccc}
\toprule
Method & MME$^{\mathrm{P}}$ & MME$^{\mathrm{C}}$ & SEED & LLaVA$^{\mathrm{W}}$ & MMB & MM-Vet & SQA$^{\mathrm{I}}$ & VisWiz & GQA & POPE & CHAIR$_{\mathrm{S}}$&CHAIR$_{\mathrm{I}}$ \\
\midrule
($\lambda=0.1$) & 1508.6 & \textbf{369.3} & 60.0 & 66.7 & 64.9 & 31.6 & 70.0 & 54.0 & 62.0 & 86.90 & 40.8 &10.2\\
($\lambda=0.5$) & 1515.4 & 364.5 & 60.1 & 68.2 & 64.9 & 32.4 & 69.7 & 54.0 & 62.1 & 86.90 & 28.2&6.7 \\
($\lambda=0.9$) & \textbf{1524.2} & 367.9 & \textbf{60.3} & \textbf{71.1} & \textbf{65.4} & \textbf{33.9} & \textbf{70.7} & \textbf{54.1} & \textbf{62.3} & \textbf{87.01} & \textbf{21.0}&\textbf{6.0} \\
\bottomrule
\end{tabular}
\end{table*}

\begin{table*}[!t]
\centering
\small
\renewcommand{\arraystretch}{1}
\caption{Reward score and average performance score across multiple iterations of CSR on LLaVA-1.5 7B.}
\label{reward_table}
\begin{tabular}{lccccc}
\toprule
Iteration & Iter-1 & Iter-2 & Iter-3 & Iter-4 & Iter-5  \\
\midrule
Chosen reward & 0.4885 & 0.5040 & 0.5052 & 0.5055 & 0.5066  \\
Rejected reward & 0.4551 & 0.4788 & 0.4789 & 0.4794 & 0.4799  \\
Avg performance score & 66.61 & 71.02 & 71.74 & 72.09 & 72.24  \\
\bottomrule
\end{tabular}
\end{table*}


\section{Proofs}
\label{appendix:proof}
\begin{proof}[Proof of Theorem~\ref{thm}]
    Let us first denote the distribution of $y_{w}$ by $\pi_{\theta_t}^*(y\mid x)$. As we take $y_w=\arg\max_y R(y)$, this distribution is a point mass. As a result, the global minimizer to \eqref{eq:CSR_update} will then converge to $\pi_{\theta_t}^*(y\mid x)$.


In the following, we analyze how $\pi_{\theta_t}^*(y\mid x)$ is shaped.

By the \ours\ procedure, we have $$
 y_w =\argmax_{y} (1-\lambda)) \langle U_1^\top x_v,  U_1^\top y\rangle - \lambda\|y- V_1x_v+V_2x_t \|^2=\frac{1-\lambda}{\lambda} U_1U_1^\top x_v  + V_1x_v+V_2x_t.
$$

We can see that \ours\ up-weights the signal of the image input. 

Then \begin{align*}
L(y)=\min_{\beta\in\R^{d_t}}\E[(z-\beta^\top y)^2]=&\min_{\beta\in\R^{d_t}}\E[(\beta^{*\top} y_{truth}-\beta^\top y)^2]\\
=&\min_{\beta\in\R^{d_t}}\E[(\beta^{*\top} (V_1^* x_v+V_2^*x_t))-\beta^\top y)^2]+Var(\epsilon_y) \|\beta^{*}\|^2
\end{align*}
We have 
\begin{align*}
\E[(\beta^{*\top} (V_1^* x_v+V_2^*x_t))-\beta^\top y)^2]&=\E[(\beta^{*\top} \left(V_1^* x_v+V_2^*x_t)\right)\\
&-\beta^\top \left( (\frac{1-\lambda}{\lambda}U_1U_1^\top +V_1)x_v+ V_2 x_t\right))^2]
\end{align*}

As we assume $\frac{\|V_1\|}{\|\beta^{*\top}V_1^*\|}\ll \frac{\|V_2\|}{\|\beta^{*\top}V_2^*\|}$ and due to the smoothness over parameters. Without loss of generality, we prove the claim for the case where $\|V_1\|=0$, that is, $V_1$=0. 

In this case, we want to show that there exists $\lambda\in(0,1)$, such that
\begin{align*}
&\min_{\beta\in\R^{d_t}}\E[(\beta^{*\top} \left(V_1^* x_v+V_2^*x_t)\right)-\beta^\top \left( (\frac{1-\lambda}{\lambda}U_1U_1^\top )x_v+ V_2 x_t\right))^2]\\
&<\min_{\beta\in\R^{d_t}}\E[(\beta^{*\top} \left(V_1^* x_v+V_2^*x_t)\right)-\beta^\top \left( V_2 x_t\right))^2]
\end{align*}

Due to the independence between $x_v$ and $x_t$, the right-hand sides is lower bounded by $\beta^{*\top}V_1^* Cov(x_t) V_1^{*\top}\beta^*$.

The left-hand side, on the other hand, can be upper bounded by the value when we take $\beta_0$ such that $\frac{1-\lambda}{\lambda} U_1U_1^\top\beta_0=U_1U_1^\top V_1^{*\top}\beta^*$, which equals to $\beta^{*\top}V_1^*(I-U_1U_1^\top) Cov(x_t)(I-U_1U_1^\top) V_1^{*\top}\beta^*$.

As we assume $\|\beta^{*\top}V_1^{*\top}\beta^*\|\gg \|\beta^{*\top}V_2^{*\top}\beta^*\|$, this is a dominating term when the left-hand side is evaluated at $\beta_0$. 

In addition, we assume $Cov(\xi_1)$ is well-conditioned, implying $Cov(x_t)$ is well-conditioned, and therefore 
$$\beta^{*\top}V_1^*(I-U_1U_1^\top) Cov(x_t)(I-U_1U_1^\top) V_1^{*\top}\beta^*<\beta^{*\top}V_1^* Cov(x_t) V_1^{*\top}\beta^*.
$$
We complete the proof. 


\end{proof}

\begin{figure*}[t!]
  \centering
    \vspace{2em}
\includegraphics[width=1\textwidth]{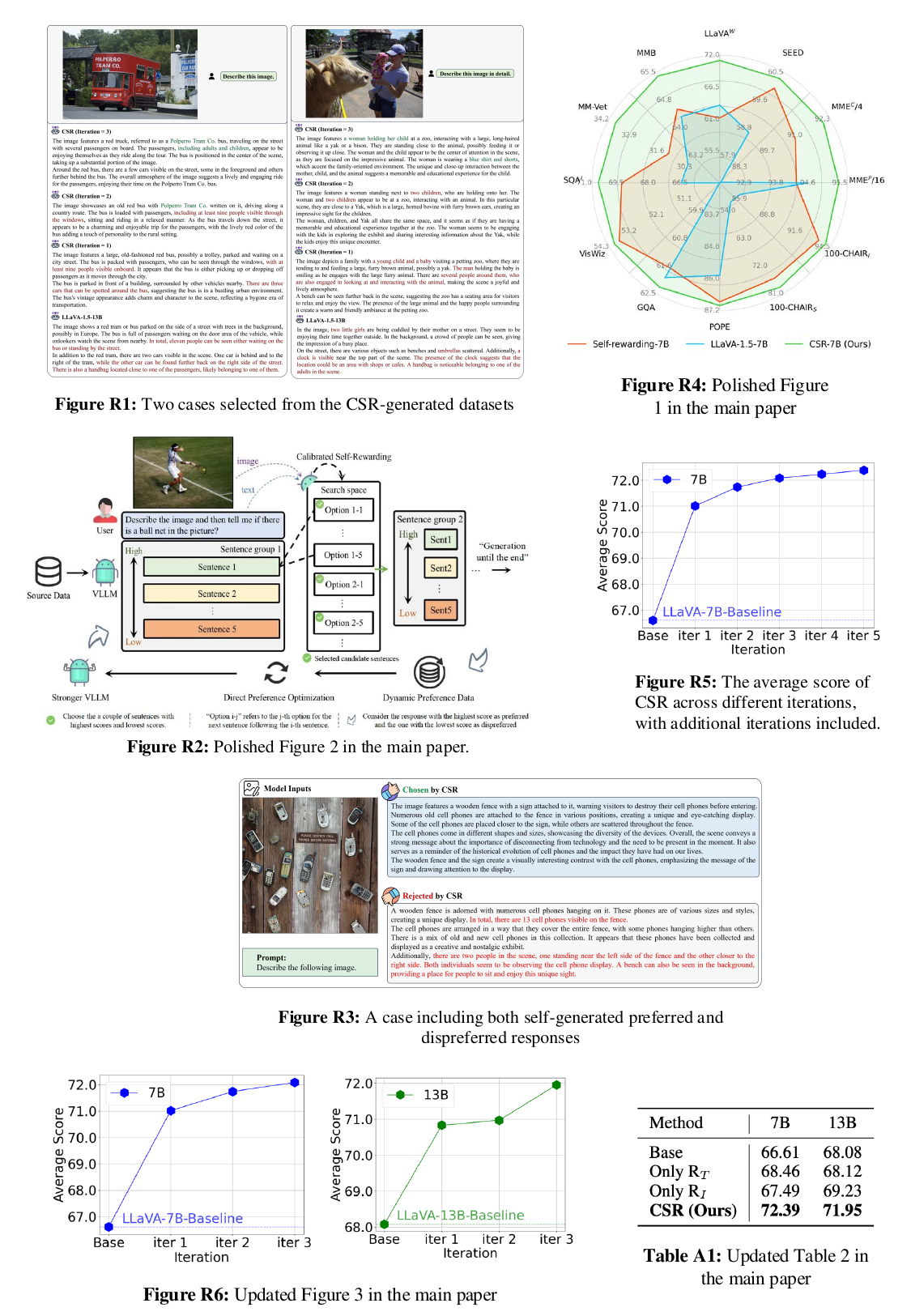}
\caption{A case including both self-generated preferred and
dispreferred responses.}  \label{fig:case2}
  \vspace{-1.5em}
\end{figure*}
\section{Limitations}
\label{sec:limitations}
Due to limitations in computing resources, we conducted only three iterations of \ours. Additionally, our experiments were confined to 7B and 13B models. This restriction prevents us from determiing if our method adheres to a scaling law. We hope to continue iterative training in the future and to train larger models, given access to more computing resources, to explore the upper limits of our method.
\section{Broader Impacts}
\label{sec:impact}
Our approach requires no additional human annotations and significantly enhances model performance using the model itself. Technically, our method may inspire more researchers to explore how multimodal models can learn from themselves. From a societal impact perspective, our method significantly reduces hallucinations in LVLMs, a major factor affecting the application of AI in real-world scenarios. Our approach promotes more responsible use of LVLMs. However, it is important to note that while our method significantly reduces hallucinations, they still occur. Therefore, it is crucial to employ various measures to ensure safety and stability when applying this approach in real-world scenarios.

\end{document}